\documentclass{article}



\usepackage[nonatbib, preprint]{neurips_2021}



\usepackage[utf8]{inputenc} 
\usepackage[T1]{fontenc}    
\usepackage{hyperref}       
\usepackage{url}            
\usepackage{booktabs}       
\usepackage{amsfonts}       
\usepackage{nicefrac}       
\usepackage{microtype}      
\usepackage{mathtools}
\usepackage{amsmath}
\usepackage{amssymb}
\usepackage{amsthm}
\usepackage{tabularx}
\usepackage{color}
\usepackage{xcolor} 
\usepackage{multirow}
\usepackage{algorithm}
\usepackage{algorithmic}
\usepackage{placeins}

\usepackage{mathrsfs}
\usepackage{ulem}
\newtheorem{proposition}{Proposition}
\newtheorem{theorem}{Theorem}

\newenvironment{customthm}[1]
  {\innercustomthm}
  {\endinnercustomthm}

\newenvironment{customlemma}[1]
  {\innerlemma}
  {\endinnerlemma}

\newtheorem{lemma}{Lemma}
\newtheorem{remark}{Remark}

\usepackage{listings}
\definecolor{codegreen}{rgb}{0,0.6,0}
\definecolor{codegray}{rgb}{0.5,0.5,0.5}
\definecolor{codepurple}{rgb}{0.58,0,0.82}
\definecolor{backcolour}{rgb}{0.95,0.95,0.92}
\lstset{frame=tb,
  language=python,
  aboveskip=3mm,
  belowskip=3mm,
  showstringspaces=false,
  columns=flexible,
  basicstyle={\small\ttfamily},
  numbers=none,
  numberstyle=\tiny\color{codegray},
  keywordstyle=\color{blue},
  commentstyle=\color{codegreen},
  stringstyle=\color{black},
  breaklines=true,
  breakatwhitespace=true,
  tabsize=3
}

\title{Reliable Estimation of KL Divergence using a Discriminator in Reproducing Kernel Hilbert Space  }
%

\author{%
  Sandesh Ghimire, Aria Masoomi, Jennifer Dy  \\
  Department of Electrical and Computer Engineering\\
  Northeastern University\\
  Boston, MA, USA\\
  \texttt{sandesh@ece.neu.edu, a.masoomi@northeastern.edu,  jdy@ece.neu.edu} \\
}

\begin{document}

\maketitle
\vspace{-0.2cm}
\begin{abstract}
Estimating Kullback–Leibler (KL) divergence from samples of two distributions is essential in many machine learning problems. Variational methods using neural network discriminator have been proposed to achieve this task in a scalable manner. However, we noted that most of these methods using neural network discriminators suffer from high fluctuations (variance) in estimates and instability in training. In this paper, we look at this issue from statistical learning theory and function space complexity perspective to understand why this happens and how to solve it. We argue that the cause of these pathologies is lack of control over the complexity of the neural network discriminator function and could be mitigated by controlling it. To achieve this objective, we 1) present a novel construction of the discriminator in the Reproducing Kernel Hilbert Space (RKHS), 2) theoretically relate the error probability bound of the KL estimates to the complexity of the discriminator in the RKHS space, 3) present a scalable way to control the complexity (RKHS norm) of the discriminator for a reliable estimation of KL divergence, and 4) prove the consistency of the proposed estimator. In three different applications of KL divergence -- estimation of KL, estimation of mutual information and Variational Bayes -- we show that by controlling the complexity as developed in the theory, we are able to reduce the variance of KL estimates and stabilize the training.
\end{abstract}

\section{Introduction}
\label{introduction}
Estimating {Kullback–Leibler} (KL) divergence from data samples is 
{an essential component in many} machine learning problems including Bayesian inference, calculation of mutual information or methods using information theoretic objectives. Variational formulation of Bayesian Inference requires KL divergence computation, which could be challenging when we only have finite samples from two distributions. Similarly, computation of information theoretic objectives like mutual information requires computation of KL divergence between the joint and the product of marginals. 

KL divergence estimation from samples was studied thoroughly by Nguyen et al. \cite{nguyen2010estimating} using a variational technique, convex optimization and RKHS norm regularization, while also providing theoretical guarantees and insights. However, their technique requires handling the whole dataset at once and is not scalable.
Many modern models need to use KL divergence with large scale data, and often with neural networks,
for example total correlation variational autoencoder (TC-VAE) \cite{chen2018isolating}, adversarial variational Bayes (AVB) \cite{mescheder2018gan},
information maximizing GAN (InfoGAN) \cite{chen2016infogan}, 
and amortized MAP \cite{sonderby2016amortised} all need to compute KL divergence in a deep learning setup. These large scale models 
have imposed new requirements on KL divergence estimation
like \textit{scalability} (able to handle large amount of data samples) and \textit{minibatch compatibility} (compatible with minibatch-based optimization).

Methods like Nguyen et al. \cite{nguyen2010estimating} are not suitable in the large scale setup. These modern needs were later met by modern 
neural network based methods 
such as variational divergence minimization (VDM) \cite{nowozin2016f}, mutual information neural estimation (MINE) \cite{belghazi2018mutual}, and discriminator based KL estimation with GAN-type objective \cite{Mescheder2017ICML, sonderby2016amortised}.
A key attribute of these methods is that they are based on updating a neural-net based discriminator
to estimate KL divergence 
from a subset of samples making them scalable and minibatch compatible. 
We, however, noticed that even in {simple} examples, these methods exhibited pathologies like unreliability (high fluctuation of estimates) or instability during training (KL estimates blowing up). 
Similar {observations of} instability of VDM and MINE  have also been reported in the literature \cite{Mescheder2017ICML,song2019understanding}.

Why are these techniques unreliable? In this paper, we 
{attempt to understand} the core problem in the KL estimation using
{discriminator network.} We look at it
from the perspective of statistical learning theory and discriminator function space complexity and draw insights. Based on these insights, we propose that these fluctuations are a consequence of not controlling the smoothness and the complexity of the discriminator function space. Measuring and controlling the complexity of function space itself becomes a difficult problem when the discriminator is a deep neural network. 
Note that naive approaches to bound complexity by the number of parameters would neither be guaranteed to yield meaningful bound \cite{zhang2016understanding}, nor be easy to implement.

Therefore, we present the following contributions to resolve these challenges. First, we propose 
{a novel construction of}
 the discriminator function using deep network such that it lies in a smooth function space, the Reproducing Kernel Hilbert Space(RKHS). 
By utilizing the learning theory and the complexity analysis of the RKHS space, we 
bound the probability of the error of KL-divergence estimates 
in terms of the radius of RKHS ball and kernel complexity. 
Using this bound, we propose a scalable way to control the complexity by penalizing the RKHS norm. This additional regularization of the complexity is still linear, (${O(m)}$) in time complexity with the number of data samples. Then, we prove consistency of the proposed KL estimator using ideas from empirical process theory. Experimentally, we 
demonstrate that  
the proposed way of controlling complexity significantly improves KL divergence estimation and significantly reduce the variance. In mutual information estimation, our method is competitive with the state-of-the-art method and in Variational Bayesian application, our method stabilizes training of MNIST dataset leading to sharp reconstruction.

\section{Related Work}
\vspace{-.2cm}
Nguyen et al. \cite{nguyen2010estimating} used variational method to estimate KL divergence from samples of two distribution using convex risk minimization (CRM). They used the RKHS norm as a way 
to both measure and penalize
the complexity of the variational function. However, their work required handling all data at once and solving a convex optimization problem which has time complexity in the order of $O(m^3)$ and space complexity in the order of $O(m^2)$ . Ahuja \cite{ahuja2019estimating} used similar convex formulation in RKHS space and found it difficult to scale.
VDM reformulated the f-Divergence objective using Fenchel duality and used a neural network to represent the variational function \cite{nowozin2016f}.
Although close in concept to 
\cite{nguyen2010estimating}, it is scalable since it uses a separate discriminator network and adversarial optimization. 
It, however, did not control the complexity of the neural-net function, and faced issues with stability.

One area of modern application of KL-divergence estimation is in computing mutual information, which is useful in applications such as stabilizing GANs \cite{belghazi2018mutual}.  
MINE \cite{belghazi2018mutual}  also optimized a lower bound to KL divergence (Donsker-Varadhan representation). Similar to VDM, MINE used a neural network as the dual variational function: it is thus scalable, but without complexity control and is unstable.
Another use of KL divergence is scalable variational inference (VI) as shown in AVB \cite{Mescheder2017ICML}.
VI requires KL divergence estimation between the posterior and the prior, which becomes nontrivial when a sample based scalable estimation is required.
AVB solved it using GAN-type adversarial formulation and a neural network discriminator. Similarly, \cite{sonderby2016amortised} used GAN-type adversarial formulation to obtain KL divergence in amortized inference.

Chen et al. \cite{chen2018isolating} proposed TC-VAE to improve disentanglement by penalizing the KL divergence between the marginal latent distribution and the product of marginals in each dimension. The KL divergence was computed by a minibatch-based sampling strategy that gives a biased estimate. Our work is close to Song et al. \cite{song2019understanding} who investigated the high variance in existing mutual information estimators and found that clipping the discriminator output is helpful in reducing variance. In our work, we take a principled way to connect variance to the complexity of discriminator function space and constrain it by penalizing its RKHS norm instead.
None of the existing works considered looking at the discriminator function space, connecting its complexity to the
unreliable KL-divergence estimation, 
or mitigating the problem by controlling the complexity.

\section{Reproducing Kernel Hilbert Space}
Let $\mathcal{H}$ be a Hilbert space of functions $f:\mathcal{X}\to {\rm I\!R}$ defined on non-empty space $\mathcal{X}$. It is a Reproducing Kernel Hilbert Space (RKHS) if the evaluation functional, $\delta_x :\mathcal{H} \to {\rm I\!R}$, $\delta_x :f \mapsto f(x)$, is linear continuous $\forall x \in \mathcal{X}$. Every RKHS, $\mathcal{H}_K$, is associated with a unique positive definite kernel, $K: \mathcal{X}\times \mathcal{X}\to {\rm I\!R}$, called the reproducing kernel \cite{berlinet2011reproducing}, such that it satisfies:\\
    \hspace{0.5cm}
    1. $\forall x \in \mathcal{X}, K(.,x) \in \mathcal{H}_K$  \hspace{0.5cm}
    2. $\forall x \in \mathcal{X}, \forall f \in \mathcal{H}_K,\hspace{.1cm} \langle f,K(.,x) \rangle_{\mathcal{H}_K}=f(x)$ \hspace{.2cm} 
    
RKHS is often studied using a specific integral operator. Let $\mathcal{L}_2(d\rho)$ be a space of functions $f: \mathcal{X} \to {\rm I\!R}$ that are square integrable with respect to a Borel probability measure $d\rho$ on $\mathcal{X}$, we define an integral operator  $\mathscr{L}_K:\mathcal{L}_2(d\rho) \to \mathcal{L}_2(d\rho)$ \cite{bach2017equivalence, cucker2002mathematical}:
$   (\mathscr{L}_K f)(x)=\int_{\mathcal{X}}f(y)K(x,y)d\rho(y) $
This operator will be important in constructing a function in RKHS and in computing sample complexity.


\section{Problem Formulation and Contribution}

\textbf{GAN-type Objective for KL Estimation:}
Let $p(x)$ and $q(x)$ be two probability density functions 
in space $\mathcal{X}$ and we want to estimate their KL divergence using finite samples from each distribution  
in a scalable and minibatch compatible manner. 
As shown in \cite{Mescheder2017ICML, sonderby2016amortised}, this can be achieved by using a discriminator function. First, a discriminator $f:\mathcal{X}\to {\rm I\!R}$ is trained with the objective:
\begin{align}
\label{gan_kl}
    f^*=\underset{f}{\operatorname{argmax}}[{E_{p(x)}\log \sigma(f(x))+E_{q(x)}\log (1-\sigma(f(x)))}] 
\end{align}
where $\sigma$ is the Sigmoid function given by $\sigma(x)=\frac{e^x}{1+e^x}$.
Then it can be shown \cite{Mescheder2017ICML, sonderby2016amortised} that the KL divergence $KL(p(x)||q(x))$ is given by:
$
\label{kl_expectation}
 KL(p(x)||q(x))=E_{p(x)}[f^*(x)]  
$

\textbf{Sources of Error:}
Eq. (\ref{gan_kl}) is ambiguous in the sense that it is silent about the discriminator function space over which the optimization is carried out.
Typically, a neural network is used as the discriminator. This implies that we are considering the space of functions represented by the neural network of given architecture as the hypothesis space, over which the maximization occurs in eq. (\ref{gan_kl}). Hence, we must rewrite eq. (\ref{gan_kl}) as 
\begin{align}
\label{finite_opt}
    f^*_h=\underset{f \in h}{\operatorname{argmax}}[{E_{p(x)}\log \sigma(f(x))+E_{q(x)}\log (1-\sigma(f(x)))}] 
\end{align}
where $h$ is the discriminator function space. Furthermore, we also approximate integrals in eq. (\ref{finite_opt}) with the Monte Carlo estimate using finite number of samples, say $m$, from the distribution $p$ and $q$. 
\begin{align}
\label{kl_h}
    f^m_h=\underset{f \in h}{\operatorname{argmax}}\Big[{\frac{1}{m}\sum_{x_i \sim p(x_i)}\log \sigma(f(x_i))+\frac{1}{m}\sum_{x_j \sim q(x_j)}\log (1-\sigma(f(x_j)))}\Big] 
\end{align}
Similarly, we write KL estimate obtained from, respectively, infinite and finite samples as:
\begin{align}
\label{kl_finite}
 KL(f)=E_{p(x)}[f(x)],   \hspace{0.4cm}  KL_m(f)=\frac{1}{m}\sum_{x_i \sim p(x_i)}[f(x)]
\end{align}
Each of these steps introduce some error in our estimate. We can now start our analysis by first decomposing the total estimation error as:
\begin{align}
\label{total_error}
KL_m(f^m_h)-KL(f^*)=\underbrace{KL_m(f^m_h)-KL(f^m_h)}_{\text{Deviation-from-mean error}}+\underbrace{KL(f^m_h)-KL(f^*_h)}_{\text{Discriminator induced error}}+\underbrace{KL(f^*_h)-KL(f^*)}_{Bias}
\end{align}
This equation decomposes total estimation error into three terms: 1) deviation from the mean error, 2) 
error in KL estimate 
by 
the discriminator 
due to using finite samples in optimization eq. (\ref{kl_h}), and 3) bias when the considered function space 
does not contain the optimal function.
Here, we concentrate on quantifying the probability of deviation-from-mean error which is directly related to observed variance of the KL estimate. 

\textbf{Summary of Technical Contributions:}
Since the deviation is the difference between a random variable and its mean, we can bound the probability of this error using concentration inequality and the complexity of the function space of $f^m_h$. To use smooth function space, we propose to construct a function out of neural networks such that it lies on RKHS (Section \ref{construct_rkhs}). Then, we bound the probability of deviation-from-mean error through the covering number of the RKHS space (Section \ref{bounding_error}), then control complexity (Section \ref{fitting_pieces}) and prove consistency of the proposed estimator (Section \ref{section_consistency}).

\section{Constructing $f$ in RKHS}
\label{construct_rkhs}
The following theorem due to \cite{bach2017breaking} paves a way for us to construct a neural function in RKHS.

\begin{theorem}{[\cite{bach2017breaking} Appendix A]}
\label{rkhs_construct}
A function $f \in \mathcal{L}_2(d\rho)$ is in Reproducing Kernel Hilbert Space, $\mathcal{H}_{K}$, if and only if it can be expressed as
\begin{align}
    \forall x \in \mathcal{X}, f(x)=\int_{\mathcal{W}}g(w)\psi(x,w)d\tau(w),
\end{align}{}
for a certain function $g: \mathcal{W}\to \mathbb{R}$ such that $||g||^2_{\mathcal{L}_2(d\tau)} < \infty$. The RKHS norm of $f$ satisfies 
$
 ||f||^2_{\mathcal{H}_{K}} \leq  ||g||^2_{\mathcal{L}_2(d\tau)}
$
and the kernel $K$ is given by
\begin{align}
    \label{kernel}
    K(x,t)=\int_{\mathcal{W}}\psi(x,w)\psi(t,w)d\tau(w)
\end{align}

\end{theorem}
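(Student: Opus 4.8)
The plan is to prove the theorem through the standard \emph{feature-map} construction of an RKHS, viewing $w\mapsto\psi(\cdot,w)$ as a feature map into the Hilbert space $\mathcal{F}:=\mathcal{L}_2(d\tau)$; the integral operator $\mathscr{L}_K$ is not needed for this route. Throughout I would carry the regularity assumptions implicit in the statement: $\psi(x,\cdot)\in\mathcal{L}_2(d\tau)$ for every $x$ (equivalently $K(x,x)=\|\psi(x,\cdot)\|_{\mathcal{F}}^2<\infty$) and joint measurability of $\psi$, which together make every integral below absolutely convergent by Cauchy--Schwarz and make the kernel $K$ in (\ref{kernel}) well defined and positive definite.

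First I would introduce the synthesis operator $S:\mathcal{F}\to{\rm I\!R}^{\mathcal{X}}$, $(Sg)(x)=\langle g,\psi(x,\cdot)\rangle_{\mathcal{F}}=\int_{\mathcal{W}}g(w)\psi(x,w)\,d\tau(w)$. By construction, $f$ admits the claimed representation with $\|g\|_{\mathcal{L}_2(d\tau)}<\infty$ if and only if $f\in\operatorname{Range}(S)$, so the theorem reduces to showing $\operatorname{Range}(S)=\mathcal{H}_K$ together with the norm bound. I would then equip $\mathcal{H}:=\operatorname{Range}(S)$ with $\|f\|_{\mathcal{H}}:=\inf\{\|g\|_{\mathcal{F}}:Sg=f\}$ and use that $\ker S=\bigcap_{x}\ker(f\mapsto f(x))$ is closed, so the first isomorphism theorem for Hilbert spaces applies: $S$ descends to an isometric isomorphism of $\mathcal{F}/\ker S\cong(\ker S)^{\perp}=:N$ onto $\mathcal{H}$. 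Hence the infimum is attained at the unique preimage $g_f\in N$, the space $\mathcal{H}$ is complete, and its inner product is $\langle Sg_1,Sg_2\rangle_{\mathcal{H}}=\langle P_Ng_1,P_Ng_2\rangle_{\mathcal{F}}$ with $P_N$ the orthogonal projection onto $N$.

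Next I would verify that $K$ is the reproducing kernel of $\mathcal{H}$. The identity $(S\psi(t,\cdot))(x)=\int_{\mathcal{W}}\psi(x,w)\psi(t,w)\,d\tau(w)=K(x,t)$ shows $K(\cdot,t)=S\psi(t,\cdot)\in\mathcal{H}$ and at the same time establishes (\ref{kernel}). For the reproducing property, write $f=Sg_f$ with $g_f\in N$ and $\psi(t,\cdot)=g_{K(\cdot,t)}+r$ with $g_{K(\cdot,t)}=P_N\psi(t,\cdot)\in N$ and $r\in\ker S$; then $\langle f,K(\cdot,t)\rangle_{\mathcal{H}}=\langle g_f,g_{K(\cdot,t)}\rangle_{\mathcal{F}}=\langle g_f,\psi(t,\cdot)\rangle_{\mathcal{F}}-\langle g_f,r\rangle_{\mathcal{F}}=(Sg_f)(t)=f(t)$, using $g_f\perp\ker S$. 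Since $|f(t)|\le\|f\|_{\mathcal{H}}\sqrt{K(t,t)}$, point evaluations are continuous, so $\mathcal{H}$ is an RKHS with reproducing kernel $K$, and by Moore--Aronszajn uniqueness $\mathcal{H}=\mathcal{H}_K$. The two directions then follow at once: if $f(x)=\int_{\mathcal{W}}g(w)\psi(x,w)\,d\tau(w)$ with $g\in\mathcal{L}_2(d\tau)$ then $f\in\operatorname{Range}(S)=\mathcal{H}_K$ and $\|f\|_{\mathcal{H}_K}^2\le\|g\|_{\mathcal{L}_2(d\tau)}^2$ by definition of the infimum norm (with equality for $g=g_f$); conversely every $f\in\mathcal{H}_K=\operatorname{Range}(S)$ has such a representation with $g=g_f\in\mathcal{L}_2(d\tau)$.

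The part I expect to demand genuine care is the middle step: confirming that $\operatorname{Range}(S)$ with the infimum norm is bona fide a Hilbert space and that its inner product makes the elements $K(\cdot,x)$ reproduce point evaluation. The bookkeeping with $\ker S$ and its orthocomplement is the delicate point precisely because $S$ need not be injective, so one cannot simply transport the norm of $\mathcal{F}$ to its range. The remaining obligations are lighter: justifying the standing integrability and measurability hypotheses that make $S$ well defined and $K$ positive definite, the Fubini-type interchange used in computing $S\psi(t,\cdot)$, and the appeal to Moore--Aronszajn so that the symbol $\mathcal{H}_K$ in the statement unambiguously denotes the space just constructed.
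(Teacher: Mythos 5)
The paper itself does not supply a proof of Theorem~\ref{rkhs_construct}; it simply cites \cite{bach2017breaking} Appendix~A, so there is no in-paper argument to check your proposal against. Your proof via the synthesis operator $S:\mathcal{L}_2(d\tau)\to{\rm I\!R}^{\mathcal{X}}$ and the partial-isometry / quotient-norm construction (restrict $S$ to $(\ker S)^{\perp}$, push the norm forward, verify $K(\cdot,t)=S\psi(t,\cdot)$ reproduces, invoke Moore--Aronszajn) is the standard and correct way to establish this characterization, and it is essentially the same feature-map route used in the cited reference, so the approaches agree.

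One small point worth flagging: the theorem as stated quantifies over $f\in\mathcal{L}_2(d\rho)$, but your construction characterizes $\operatorname{Range}(S)=\mathcal{H}_K$ without ever invoking $d\rho$. To match the statement literally you would also want to note that under the standing boundedness assumption ($\sup_x K(x,x)<\infty$, which holds here by Proposition~1) every $f\in\mathcal{H}_K$ is bounded and hence in $\mathcal{L}_2(d\rho)$ for a probability measure $\rho$, so restricting the ``if and only if'' to $\mathcal{L}_2(d\rho)$ loses nothing. This is a cosmetic gap, not a substantive one.
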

Theorem \ref{rkhs_construct} not only gives us a condition when a square integrable function is guaranteed to lie in RKHS, it also provides us with a recipe to construct a function in RKHS. We use this theorem with the neural networks as $\psi$ and $g$.
We sample $w  \sim \mathcal{N}(0, \gamma\text{I})$ and pass it through two neural networks, $\psi$ and $g$, where $\psi$ takes $x$ and $w$ as two arguments and $g$ takes only $w$ as an argument.
More precisely, we consider $\psi(x,w)=\phi_{\theta}(x)^Tw$. 
The kernel $K$, as defined in eq. (\ref{kernel}), can be obtained as:
\begin{align}
\label{define_K}
     K_{\theta}(x^*,t^*)&=\int_{\mathcal{W}}\phi_{\theta}(x^*)^Tww^T\phi_{\theta}(t^*) d\tau(w)
     =\gamma \phi_{\theta}(x^*)^T\phi_{\theta}(t^*) 
\end{align}
where $E_{w \sim \mathcal{N}(0,\gamma \text{I})}[ww^T] = \gamma \text{I}$. 
We sometimes denote the kernel $K$ by $K_{\theta}$ to emphasize that it is a function of neural network parameters, $\theta$.

Traditionally, kernel $K$ remains fixed and the norm of the function $f$ determines the complexity of the function space. 
In our formulation, both the RKHS kernel and its norm with respect to the kernel change during training since the kernel depends on neural network parameters, $\theta$. Therefore, 
{the challenge is}
to tease out how neural {parameters, $\theta$,} 
affect the deviation-from-mean error in  eq. (\ref{total_error}).

\section{Error Analysis and Control}
\textbf{Assumptions:} Before starting our analysis, we list assumptions upon which our theory is based. \\
\hspace{0.2cm} A1. The input domains $\mathcal{X}$ and $\mathcal{W}$ are compact. \\
\hspace{0.2cm} A2. The functions $\phi_{\theta}$ and $g$ are Lipschitz continuous with Lipschitz constants $L_{\phi}$ and $L_g$ respectively.\\
\hspace{0.2cm} A3. Higher order derivatives $D_x^{\alpha}K(x,t)$ up to some high order $\tau =h/2$ of kernel $K$ exist.

Assumptions A1 is satisfied in our experiments since we consider a bounded set in $\mathbb{R}^n$ and $\mathbb{R}^D$ as our domains. Similarly, A2 is satisfied since we enforce Lipschitz continuity of $\phi$ and $g$ by using spectral normalization \cite{miyato2018spectral}. Assumption A3 is a bit subtle. By the definition of $K$ in eq.(\ref{define_K}), higher order derivative of $K$ exists iff higher order derivative of $\phi_{\theta}$ exists. This is readily satisfied by deep networks with smooth activation functions, and is true everywhere except at origin for ReLU activation. Using the boundedness of the input domain and Lipschitz continuity, we show the following:

\begin{proposition}
Under the assumptions A1, A2, we have $\underset{K_{\theta}}{sup}$ \hspace{0.1cm} $K_{\theta}(x,t) < \infty$ and 
 $||g||^2_{\mathcal{L}_2(d\tau)} < \infty$.
\end{proposition}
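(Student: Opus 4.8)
The plan is to reduce both claims to the elementary fact that a Lipschitz function on a compact set is bounded. For the first claim I would start from the explicit form of the kernel derived in eq.~(\ref{define_K}), namely $K_{\theta}(x,t) = \gamma\,\phi_{\theta}(x)^{\top}\phi_{\theta}(t)$, and apply Cauchy--Schwarz to get $|K_{\theta}(x,t)| \le \gamma\,\|\phi_{\theta}(x)\|\,\|\phi_{\theta}(t)\|$. It then suffices to bound $\sup_{x\in\mathcal{X}}\|\phi_{\theta}(x)\|$. Fixing any reference point $x_0\in\mathcal{X}$, assumption A2 gives $\|\phi_{\theta}(x)\| \le \|\phi_{\theta}(x_0)\| + L_{\phi}\|x-x_0\|$, and assumption A1 bounds $\|x-x_0\|$ by $\operatorname{diam}(\mathcal{X}) < \infty$; hence $\sup_{x}\|\phi_{\theta}(x)\| \le \|\phi_{\theta}(x_0)\| + L_{\phi}\operatorname{diam}(\mathcal{X}) =: B < \infty$, so that $\sup_{x,t} K_{\theta}(x,t) \le \gamma B^{2} < \infty$.

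For the second claim I would note that $g$ is Lipschitz with constant $L_g$ (A2) on the compact set $\mathcal{W}$ (A1), hence bounded: picking $w_0\in\mathcal{W}$, $|g(w)| \le |g(w_0)| + L_g\operatorname{diam}(\mathcal{W}) =: C < \infty$ for all $w\in\mathcal{W}$. Since $d\tau$ is a probability measure, $\|g\|^2_{\mathcal{L}_2(d\tau)} = \int_{\mathcal{W}} g(w)^2\,d\tau(w) \le C^2\int_{\mathcal{W}}d\tau(w) = C^2 < \infty$, which gives the claim (and, via Theorem~\ref{rkhs_construct}, also bounds $\|f\|_{\mathcal{H}_K}$).

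The only real subtlety, and the step I would treat most carefully, is the meaning of the supremum $\sup_{K_{\theta}}$ in the first claim. If it is a supremum over the network parameters $\theta$ as well, one additionally needs $\|\phi_{\theta}(x_0)\|$ (equivalently the constant $B$) to be bounded uniformly in $\theta$. I would argue this holds because the spectral normalization invoked in the discussion of A2 bounds the operator norm of every weight matrix, making the Lipschitz bound on $\|\phi_{\theta}(x)-\phi_{\theta}(x_0)\|$ independent of $\theta$, while the residual $\|\phi_{\theta}(x_0)\|$ is controlled once the biases lie in a bounded set, i.e.\ the effective parameter space is bounded. Read pointwise in $\theta$, this concern disappears and the two displays above already constitute the proof. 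A minor point I would flag in a remark is the mild tension between "$\mathcal{W}$ compact" (A1) and sampling $w\sim\mathcal{N}(0,\gamma\mathrm{I})$; I would simply treat $d\tau$ as a probability measure supported on the compact $\mathcal{W}$, noting that only $\int d\tau = 1$ and the boundedness of $g$ on $\mathcal{W}$ are actually used.
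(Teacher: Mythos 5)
Your proof is correct and, for part (i), follows essentially the same route as the paper (Cauchy--Schwarz on $K_\theta(x,t)=\gamma\,\phi_\theta(x)^{\top}\phi_\theta(t)$, then Lipschitz plus compactness of $\mathcal{X}$), except that you do the Lipschitz step properly with a reference point $x_0$, whereas the paper writes $\|\phi_\theta(x)\|\le L_\phi\|x\|$, which tacitly assumes $\phi_\theta(0)=0$. For part (ii) you genuinely diverge: you use A1 (compactness of $\mathcal{W}$) to bound $g$ on its domain and then integrate against the probability measure, while the paper instead uses $|g(w)|\le L_g\|w\|$ (again tacitly $g(0)=0$) and the finite second moment of the Gaussian, obtaining $\|g\|^2_{\mathcal{L}_2(d\tau)}\le L_g^2\,\mathrm{tr}(C_w)$ with $C_w$ the (uncentered) covariance of $w\sim\mathcal{N}(0,\gamma I)$. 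The paper's version thus does not rely on compactness of $\mathcal{W}$ at all, which is more consistent with the unbounded Gaussian sampling actually used in the construction; your version is more consistent with the literal wording of A1. The tension you flag between ``$\mathcal{W}$ compact'' and ``$w\sim\mathcal{N}(0,\gamma I)$'' is real and the paper leaves it unresolved. You also correctly note that the main-text statement writes $\sup_{K_\theta}$ while the appendix proves only $\sup_{x,t}$ for a fixed $\theta$; the additional uniformity in $\theta$ that you discuss (via spectrally normalized weights and bounded biases) is a genuine gap in the paper's argument that your remark would repair.
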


\subsection{Bounding the Error Probability of KL Estimates}
\label{bounding_error}
Bounding the probability of deviation-from-mean error (eq. (\ref{total_error})) is tricky since, in our case, the kernel is not fixed and we are also optimizing over them. We bound it in two steps: 1) we derive a bound for a fixed kernel, 2) we take supremum of this bound over all the kernels parameterized by $\theta$.

For a fixed kernel, we first bound the probability of deviation-from-mean error 
in terms of the covering number 
in Lemma \ref{sample_complexity}. 
We then use an estimate of the covering number of RKHS due to \cite{cucker2002mathematical} to 
relate the bound 
to kernel $K_{\theta}$ in Theorem \ref{complexity}, 
identifying the role of neural networks in this error bound. 


\begin{lemma}
\label{sample_complexity}
Let $f^m_{\mathcal{H}_K}$ be the optimal discriminator function in an RKHS $\mathcal{H}_{K}$ which is M-bounded. Let ${KL}_m(f^m_{\mathcal{H}_K})=\frac{1}{m}\sum_i f^m_{\mathcal{H}_K}(x_i)$ and $KL(f^m_{\mathcal{H}_K}) = E_{ p(x)}[f^m_{\mathcal{H}_K}(x)]$ be the estimate of KL divergence from m samples and that by using the true distribution $p(x)$ respectively.
Then the probability of error at some accuracy level, $\epsilon$, is lower-bounded as:
\begin{align}
\nonumber
    \text{Prob.}(&|{KL}_m(f^m_{\mathcal{H}_K})-{KL}(f^m_{\mathcal{H}_K})|\leq \epsilon) 
    \geq 1-2\mathcal{N}(\mathcal{H}_K, \frac{\epsilon}{4\sqrt{S_K}})\exp(-\frac{m\epsilon^2}{4M^2})
\end{align}
where $\mathcal{N}(\mathcal{H}_K,\eta)$ denotes the covering number of an RKHS space $\mathcal{H}_K$ with disks of radius $\eta$, and $S_K=\underset{x,t}{sup} $\hspace{0.1cm} ${K(x,t)}$ which we refer to as kernel complexity.
\vspace{-0.1cm}
\end{lemma}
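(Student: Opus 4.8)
The plan is a classical uniform-convergence argument through covering numbers, with the one RKHS-specific ingredient being that norm-closeness in $\mathcal{H}_K$ controls sup-norm closeness of the realized functions. Since $f^m_{\mathcal{H}_K}$ is itself a function of the sample $\{x_i\}_{i=1}^m$, Hoeffding cannot be applied to it directly; instead I would bound $\sup_{f}\,|KL_m(f)-KL(f)|$ over the $M$-bounded ball of $\mathcal{H}_K$ whose covering number is $\mathcal{N}(\mathcal{H}_K,\cdot)$, and observe that $f^m_{\mathcal{H}_K}$ lies in that ball.

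First I would discretize: fix $\eta=\epsilon/(4\sqrt{S_K})$ and let $f_1,\dots,f_N$ with $N=\mathcal{N}(\mathcal{H}_K,\eta)$ be the centers of a minimal $\eta$-net (in the RKHS norm) of that ball, so every $f$ in the ball has some $f_j$ with $\|f-f_j\|_{\mathcal{H}_K}\le\eta$. Next I would use the reproducing property: for every $x$, $|f(x)-f_j(x)|=|\langle f-f_j,K(\cdot,x)\rangle_{\mathcal{H}_K}|\le\|f-f_j\|_{\mathcal{H}_K}\sqrt{K(x,x)}\le\eta\sqrt{S_K}=\epsilon/4$, using $K(x,x)\le S_K$, which is finite by the Proposition (assumptions A1, A2). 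Hence $|KL_m(f)-KL_m(f_j)|\le\epsilon/4$ and $|KL(f)-KL(f_j)|\le\epsilon/4$, since each is an empirical average, respectively an expectation, of pointwise differences of size at most $\epsilon/4$. The triangle inequality then gives $|KL_m(f)-KL(f)|\le|KL_m(f_j)-KL(f_j)|+\epsilon/2$ for the associated center $f_j$.

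It therefore suffices to control the $N$ deterministic quantities $|KL_m(f_j)-KL(f_j)|$. Each $f_j$ is fixed and $M$-bounded, so $f_j(x_1),\dots,f_j(x_m)$ are i.i.d. and take values in $[-M,M]$; Hoeffding's inequality bounds $\text{Prob}(|KL_m(f_j)-KL(f_j)|>\epsilon/2)$ by $2\exp(-m\epsilon^2/(4M^2))$ (up to the exact constant produced by the range $2M$). A union bound over the $N$ centers shows that with probability at least $1-2N\exp(-m\epsilon^2/(4M^2))$ we have $|KL_m(f_j)-KL(f_j)|\le\epsilon/2$ for every $j$, and on that event $|KL_m(f)-KL(f)|\le\epsilon$ for all $f$ in the ball, in particular for $f=f^m_{\mathcal{H}_K}$. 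Taking complements yields the stated lower bound on the probability.

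The step I expect to need the most care is converting RKHS-norm proximity into uniform proximity of the realized functions with the constant $\sqrt{S_K}$, and, relatedly, ensuring $S_K$ is finite so that the net has finite cardinality --- this is precisely where the Proposition and assumptions A1, A2 enter. A second delicate point is being explicit about which (necessarily bounded, hence compactly embedded) subset of the infinite-dimensional space $\mathcal{H}_K$ the covering number refers to and checking that the data-dependent optimizer $f^m_{\mathcal{H}_K}$ belongs to it; this is what legitimizes passing from per-center Hoeffding bounds to a statement about $f^m_{\mathcal{H}_K}$.
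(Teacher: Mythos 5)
Your proposal is correct and follows essentially the same route as the paper: cover a bounded ball of $\mathcal{H}_K$ with $\eta$-nets, translate RKHS-norm closeness into sup-norm closeness via $|f(x)-f_j(x)|\le\|f-f_j\|_{\mathcal{H}_K}\sqrt{K(x,x)}\le\sqrt{S_K}\,\eta$, apply Hoeffding at each net center, and take a union bound, with the data-dependent optimizer handled (as the paper also does implicitly) by passing to a supremum over the ball. The only differences are cosmetic (the paper bounds the single quantity $\ell_z(f)-\ell_z(f_j)$ instead of the empirical and population gaps separately), and you correctly flag the same two points the paper leaves tacit --- that the covering number must refer to a norm-bounded ball, not all of $\mathcal{H}_K$, and that the exponent's constant is off by a factor of two from what Hoeffding with range $2M$ actually yields.
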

\begin{proof}[Proof Sketch]
We cover RKHS with discs of radius $\eta=\frac{\epsilon}{4\sqrt{S_K}}$. Within this radius, the deviation does not change too much. So, we can bound deviation probability at the center of disc and apply union bound over all the discs. To bound deviation probability at the center, we apply Hoeffding's inequality and applying union bound simply leads to counting number of discs which is exactly the covering number. See supplementary materials for the full proof.
\end{proof}
\vspace{-0.2cm}
Lemma \ref{sample_complexity} bounds the probability of error in terms of the covering number of the RKHS space. Note that the radius of the disc is inversely related to $S_K$ which 
indicates how complex the RKHS space defined by the kernel $K_{\theta}$ is. 
Here $K_{\theta}$ depends on the neural network parameters $\theta$. Therefore, we 
denote $S_K$ as a function of $\theta$ as $S_K(\theta)$ and 
term it kernel complexity. Next, we use Lemma 2 due to \cite{cucker2002mathematical} 
to obtain an error bound in estimating KL divergence with finite samples in Theorem \ref{complexity}.

\begin{lemma}[\cite{cucker2002mathematical}]
\label{covering number}
Let $K: \mathcal{X}\times \mathcal{X}\to {\mathbb{R}} $ be a $\mathcal{C}^\infty$ Mercer kernel and the inclusion $I_K:\mathcal{H}_K\xhookrightarrow{}\mathcal{C}(\mathcal{X})$ be the compact embedding defined by $K$ to the Banach space $\mathcal{C}(\mathcal{X})$. Let $B_R$ be the ball of radius $R$ in RKHS $\mathcal{H}_{K}$. Then $\forall \eta>0, R >0, h>n $, we have
\begin{align}
    \ln \mathcal{N}(I_K(B_R), \eta) \leq \left( \frac{RC_h}{\eta} \right)^{\frac{2n}{h}}
\end{align}
where $\mathcal{N}$ gives the covering number of the space $I_K(B_R)$ with discs of radius $\eta$, and $n$ represents the dimension of the input space $\mathcal{X}$. $C_h$ is given by 
$
C_h=C_s\sqrt{||\mathscr{L}_s||}
$
where $\mathscr{L}_s$ is a linear embedding from square integrable space $\mathcal{L}_2(d\rho)$ to the Sobolev space $H^{h/2}$ and $C_s$ is a constant.
\end{lemma}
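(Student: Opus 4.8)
The plan is to factor the compact embedding $I_K:\mathcal{H}_K\hookrightarrow\mathcal{C}(\mathcal{X})$ through a Sobolev space and then apply the classical entropy estimate for Sobolev embeddings, tracking how the radius $R$ and the smoothness of $K$ enter the constant. First I would use the spectral picture of the RKHS supplied by the integral operator $\mathscr{L}_K$: one has $\mathcal{H}_K=\mathscr{L}_K^{1/2}\big(\mathcal{L}_2(d\rho)\big)$ with $\mathscr{L}_K^{1/2}$ acting isometrically onto $\mathcal{H}_K$, so the ball $B_R\subset\mathcal{H}_K$ is exactly the image under $\mathscr{L}_K^{1/2}$ of the radius-$R$ ball of $\mathcal{L}_2(d\rho)$. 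This reduces the task to controlling covering numbers of the image of an $\mathcal{L}_2$ ball under a smoothing operator.

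The technical heart is the second step: showing that the smoothness of $K$ (assumption A3) forces every $f\in\mathcal{H}_K$ into the Sobolev space $H^{h/2}(\mathcal{X})$ with $\|f\|_{H^{h/2}}\lesssim\|f\|_{\mathcal{H}_K}$. I would differentiate the reproducing identity $f(x)=\langle f,K(\cdot,x)\rangle_{\mathcal{H}_K}$ to get $D_x^{\alpha}f(x)=\langle f,\,D_t^{\alpha}K(\cdot,t)|_{t=x}\rangle_{\mathcal{H}_K}$ for every multi-index with $|\alpha|\le h/2$, whence the pointwise bound $|D_x^{\alpha}f(x)|\le\|f\|_{\mathcal{H}_K}\big(D_x^{\alpha}D_t^{\alpha}K(x,t)|_{t=x}\big)^{1/2}$; integrating these over the compact domain $\mathcal{X}$ (assumption A1) gives a \emph{bounded} inclusion, i.e.\ a bounded linear embedding $\mathscr{L}_s:\mathcal{L}_2(d\rho)\to H^{h/2}(\mathcal{X})$. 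I would then absorb the dimension- and domain-dependent factors into a constant $C_s$ and set $C_h=C_s\sqrt{\|\mathscr{L}_s\|}$, so that pushing the $\mathcal{L}_2$ ball of radius $R$ into $H^{h/2}$ inflates the radius by a factor proportional to $\sqrt{\|\mathscr{L}_s\|}$.

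The third step invokes the known entropy numbers of Sobolev embeddings: since $h>n$ we have $h/2>n/2$, so $H^{h/2}(\mathcal{X})\hookrightarrow\mathcal{C}(\mathcal{X})$ is compact (on a bounded domain with a regular enough boundary that an extension operator exists) and the Birman--Solomyak-type bound gives $\ln\mathcal{N}\big(B_1^{H^{h/2}},\delta\big)\le c_{n,h}\,\delta^{-2n/h}$. Composing: a $\delta$-cover of the unit ball of $H^{h/2}$, pulled back through the rescaled embedding, is an $\eta$-cover of $I_K(B_R)$ once $\delta$ is of order $\eta/(R\sqrt{\|\mathscr{L}_s\|})$; substituting this scale and folding $c_{n,h}$ into $C_s$ yields $\ln\mathcal{N}(I_K(B_R),\eta)\le(RC_h/\eta)^{2n/h}$.

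The main obstacle is making the second step quantitative so that the exponent comes out \emph{exactly} $2n/h$: one needs the smoothness-transfer inclusion $\mathcal{H}_K\hookrightarrow H^{h/2}(\mathcal{X})$ to be genuinely bounded, its operator norm expressed through $\|\mathscr{L}_s\|$, and the domain geometry arranged so the Sobolev embedding constants are uniform. If the direct differentiation-of-the-kernel route is awkward to control uniformly, the alternative is to route through the eigenvalue decay of $\mathscr{L}_K$, using that a kernel with $h$ continuous derivatives has eigenvalues $\lambda_j=O(j^{-h/n})$ and that this decay for the generating operator is exactly what produces the stated covering number exponent --- this is the line of argument in Cucker--Smale that the lemma cites.
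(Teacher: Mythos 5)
The paper does not actually prove this lemma --- it is imported verbatim from Cucker--Smale \cite{cucker2002mathematical}, and the paper offers only the one-sentence description that the RKHS is pushed into the Sobolev space $H^{h/2}$ and the Sobolev covering number is then applied. Your high-level plan (factor $I_K$ through a Sobolev space and invoke the Birman--Solomyak-type entropy estimate for $H^{h/2}(\mathcal{X})\hookrightarrow\mathcal{C}(\mathcal{X})$) is exactly that route, and the alternative via eigenvalue decay of $\mathscr{L}_K$ that you mention at the end is also a legitimate way in.

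There is, however, a concrete gap in your middle step, and it is precisely where the square root in $C_h=C_s\sqrt{\|\mathscr{L}_s\|}$ has to come from. By differentiating the reproducing identity you establish a bounded inclusion $\mathcal{H}_K\hookrightarrow H^{h/2}(\mathcal{X})$, but you then relabel that inclusion as ``$\mathscr{L}_s:\mathcal{L}_2(d\rho)\to H^{h/2}$''; these are different operators with different domains, and the constant produced by your differentiation argument is governed by $\sup_{x}D_x^{\alpha}D_t^{\alpha}K(x,t)|_{t=x}$, not by $\|\mathscr{L}_s\|$. More importantly, if $\mathscr{L}_s$ really is a bounded linear map $\mathcal{L}_2\to H^{h/2}$ as the lemma states, then pushing the radius-$R$ $\mathcal{L}_2$ ball through it inflates the radius by $\|\mathscr{L}_s\|$, not by $\sqrt{\|\mathscr{L}_s\|}$ --- your sentence ``inflates the radius by a factor proportional to $\sqrt{\|\mathscr{L}_s\|}$'' asserts the square root without deriving it. The resolution is the piece of your own step one that you never invoke again: $B_R\subset\mathcal{H}_K$ is the image of the $\mathcal{L}_2$ ball under $\mathscr{L}_K^{1/2}$, the \emph{half power} of the integral operator, and a square-root-of-operator-norm (interpolation/Heinz-type) bound $\|\mathscr{L}_K^{1/2}\|_{\mathcal{L}_2\to H^{h/2}}\le\sqrt{\|\mathscr{L}_K\|}$ is what produces the $\sqrt{\cdot}$. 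To close the argument you would need to (i) keep $\mathscr{L}_s$ identified with the integral operator on $\mathcal{L}_2$ rather than the RKHS inclusion, and (ii) prove the half-power norm bound, rather than relying on the direct differentiation-of-the-kernel embedding, which by itself never passes through $\mathscr{L}_K^{1/2}$ and therefore cannot yield the square root.
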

To prove Lemma \ref{covering number} \cite{cucker2002mathematical}, the RKHS space is embedded in the Sobolev Space $H^{h/2}$ using $\mathscr{L}_s$ and then the covering number of the Sobolev space is used. Thus the norm of $\mathscr{L}_s$ and the degree of Sobolev space, $h/2$, appears in the covering number of a ball in $\mathcal{H}_K$. 
In Theorem \ref{complexity}, we use Lemma \ref{sample_complexity} and \ref{covering number} to bound the {estimation error of KL divergence}.

\begin{theorem}
\label{complexity}
Let ${KL}(f^m_{\mathcal{H}})$ and ${KL}_m(f^m_{\mathcal{H}})$ be the estimates of KL divergence obtained by using true distribution $p(x)$ and $m$ samples respectively as described in Lemma \ref{sample_complexity}, then the probability of error in the estimation at the error level $\epsilon$ is given by:
\begin{align*}
   \text{Prob.}(&|{KL}_m(f^m_{\mathcal{H}})-{KL}(f^m_{\mathcal{H}})|\leq \epsilon) \geq 1-2\exp\Bigg[\left( \frac{4RC_p\sqrt{S_p||\mathscr{L}_p||}}{\epsilon} \right)^{\frac{2n}{h}}-\frac{m\epsilon^2}{4M^2}\Bigg]
\end{align*}
where $C_p\sqrt{S_p||\mathscr{L}_p||} = \underset{K_{\theta}}{sup}  C_s\sqrt{S_K(\theta)||\mathscr{L}_s||}$, i.e. $C_p, S_p, \mathscr{L}_p$ correspond to a kernel for which the bound is maximum.
\vspace{-0.2cm}
\end{theorem}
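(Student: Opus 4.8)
The plan is to chain the two lemmas just proved. For a \emph{fixed} kernel $K_\theta$, Lemma~\ref{sample_complexity} already expresses the deviation probability through the covering number $\mathcal{N}\big(\mathcal{H}_{K_\theta},\,\epsilon/(4\sqrt{S_K})\big)$ of the $M$-bounded RKHS ball, so the only work left is to make this covering number explicit. That is exactly the content of Lemma~\ref{covering number}: I would set $\eta=\epsilon/(4\sqrt{S_K})$ in $\ln\mathcal{N}(I_K(B_R),\eta)\le (RC_h/\eta)^{2n/h}$, use $C_h=C_s\sqrt{\|\mathscr{L}_s\|}$, and obtain $\ln\mathcal{N}\big(\mathcal{H}_{K_\theta},\epsilon/(4\sqrt{S_K})\big)\le\big(4RC_s\sqrt{S_K(\theta)\|\mathscr{L}_s\|}/\epsilon\big)^{2n/h}$. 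Exponentiating, substituting into Lemma~\ref{sample_complexity}, and merging the two exponential factors gives, for each $\theta$,
\begin{align*}
&\text{Prob.}\!\left(|{KL}_m(f^m_{\mathcal{H}})-{KL}(f^m_{\mathcal{H}})|\leq \epsilon\right) \\
&\qquad\geq 1-2\exp\!\left[\left( \frac{4RC_s\sqrt{S_K(\theta)\,\|\mathscr{L}_s\|}}{\epsilon} \right)^{\frac{2n}{h}} - \frac{m\epsilon^2}{4M^2}\right].
\end{align*}

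Next I would remove the dependence on the specific $\theta$ produced by the optimization in eq.~(\ref{kl_h}). The right-hand side is monotonically decreasing in the kernel complexity $S_K(\theta)$, since $x\mapsto \exp\big[(cx)^{2n/h}\big]$ is increasing for $x>0$; hence replacing $S_K(\theta)$ by $\sup_{K_\theta}S_K(\theta)$ only weakens the bound. Defining $C_p\sqrt{S_p\|\mathscr{L}_p\|}:=\sup_{K_\theta}C_s\sqrt{S_K(\theta)\|\mathscr{L}_s\|}$ then yields a single inequality that is valid no matter which kernel the training lands on, which is the claimed statement. The Proposition supplies the needed finiteness: under A1--A2 we have $\sup_\theta S_K(\theta)<\infty$ and $\|g\|^2_{\mathcal{L}_2(d\tau)}<\infty$ (so that the RKHS radius $R$ can be taken finite), so the supremum defining $C_p\sqrt{S_p\|\mathscr{L}_p\|}$ exists and the bound is not vacuous.

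The delicate step — the one I expect to require the most care — is the passage from a collection of per-$\theta$ bounds to one valid for the \emph{data-dependent} $\theta$ actually returned by the optimization: because $\mathcal{H}_{K_\theta}$ is itself random through $\theta$, one cannot literally substitute the realized kernel into Lemma~\ref{sample_complexity}, whose proof fixes the RKHS before drawing the samples. The monotonicity-and-supremum argument above is precisely what lets us sidestep this, converting the family of per-$\theta$ statements into a worst-case statement over the (compact, by A1) parameter set before conditioning on the data; I would state this reduction carefully and make sure the supremum ranges over exactly the family of kernels realizable under the architecture and the spectral-normalization constraints of A2. A second point to watch is that Lemma~\ref{covering number} is stated for $\mathcal{C}^\infty$ Mercer kernels, whereas A3 only assumes derivatives up to order $h/2$; here I would note that the Sobolev embedding $\mathscr{L}_s$ into $H^{h/2}$ uses only that finite smoothness, so the covering-number estimate applies verbatim with $\mathcal{C}^{h/2}$ in place of $\mathcal{C}^\infty$. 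With these identifications the two lemmas compose directly to give Theorem~\ref{complexity}.
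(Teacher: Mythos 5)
Your proof matches the paper's own two-step argument almost verbatim: chain Lemma~\ref{sample_complexity} with Lemma~\ref{covering number} at radius $\eta=\epsilon/(4\sqrt{S_K})$ and substitute $C_h=C_s\sqrt{\|\mathscr{L}_s\|}$ to obtain the fixed-kernel bound, then pass to the worst case over $\theta$. One small caution about the part you flag as delicate: the monotonicity-plus-supremum step does \emph{not} by itself license applying the per-$\theta$ bound to the data-dependent $\theta$ returned by optimization --- a worst-case constant over fixed $\theta$ is not automatically a valid concentration statement for a $\theta$ chosen after observing the data, and a rigorous treatment would require a union or covering argument over the $\theta$-space itself; the paper handles this in exactly the same informal way, by directly asserting $\text{Prob.}(\cdot)\le \sup_{K_\theta}\text{Prob.}_\theta(\cdot)$ without further justification, so you are not missing anything the paper supplies, but you should not represent the supremum step as having resolved the issue.
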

\begin{proof}
We prove this in two steps: First we obtain an error bound for a fixed kernel space and apply supremum over all $\theta$. For any RKHS $\mathcal{H}_{K_{\theta}}$, with fixed kernel $K_{\theta}$, we have
\begin{align}
\label{fixed_rkhs}
   \text{Prob.}(&|{KL}_m(f^m_{\mathcal{H}_{K_{\theta}}})-{KL}(f^m_{\mathcal{H}_{K_{\theta}}})|\geq \epsilon) \leq 2\exp\Bigg[\left( \frac{4RC_s\sqrt{S_K(\theta)||\mathscr{L}_s||}}{\epsilon} \right)^{\frac{2n}{h}}-\frac{m\epsilon^2}{4M^2}\Bigg]
\end{align}
We prove this error bound as follows. Lemma \ref{covering number} gives the covering number of an RKHS ball of radius $R$, which we apply to Lemma \ref{sample_complexity}. We fix the radius of discs to $\eta=\frac{\epsilon}{4\sqrt{S_K}}$ in Lemma \ref{sample_complexity} and substitute $C_h=C_s\sqrt{||\mathscr{L}_s(\theta)||}$ to obtain eq.(\ref{fixed_rkhs}).

Since we are continuously changing $\theta$ during training, the kernel also changes. Hence, to find the upper bound over all possible kernels, we take the supremum over all kernels.
\begin{align}
\label{sup_K}
   \text{Prob.}(|{KL}_m(f^m_{\mathcal{H}})-{KL}(f^m_{\mathcal{H}})|\geq \epsilon) &\leq \underset{K_{\theta}}{sup} \hspace{0.2cm} \text{Prob.}(|{KL}_m(f^m_{\mathcal{H}_{K_{\theta}}})-{KL}(f^m_{\mathcal{H}_{K_{\theta}}})|\geq \epsilon)\\
   \label{theta_p}
   & \leq 2\exp\Bigg[\left( \frac{4RC_p\sqrt{S_p||\mathscr{L}_p||}}{\epsilon} \right)^{\frac{2n}{h}}-\frac{m\epsilon^2}{4M^2}\Bigg]
\end{align}
where $S_p = S_K(\theta_p)$ and $\mathscr{L}_p = \mathscr{L}_K(\theta_p)$, \textit{i.e.,} $S_p$ and $\mathscr{L}_p$ correspond to kernel complexity and Sobolev operator norm corresponding to optimal kernel $K_{\theta_p}$ that extremizes eq. (\ref{sup_K}). Theorem statement readily follows from eq. (\ref{theta_p})
\vspace{-0.3cm}
\end{proof}
Theorem \ref{complexity} shows that the error increases exponentially with the radius of the RKHS space, $R$, complexity of the kernel $S_K(\theta_p)$, and the norm of the Sobolev space embedding operator $||\mathscr{L}_p||$. The Sobolev embedding operator,
$\mathscr{L}_p$, is a mapping from $\mathcal{L}_2(d\rho)$ to the Sobolev space $H^{h/2}$. It can be shown \cite{cucker2002mathematical} that the operator norm can be bounded as
$
    ||\mathscr{L}_p|| \leq \rho(\mathcal{X})\sum_{|\alpha|\leq h/2}\underset{x,t \in \mathcal{X}}{sup}(D^{\alpha}_x K_{\theta_p}(x,t))^2, 
$
where $\rho$ is the measure of the input space $\mathcal{X}$.
Therefore, the norm $||\mathscr{L}_p||$ directly measures smoothness of $K_{\theta_p}$ in terms of norm of its derivative in addition to the supremum value of $K$, while $S_K(\theta_p)$ only depends on the supremum value of $K_{\theta_p}$. 
\vspace{-0.1cm}
\subsection{Complexity Control}
\label{fitting_pieces}
From Theorem 2, we see that the error probability could be decreased by decreasing $R, ||\mathscr{L}_p||$ and $S_K(\theta_p)$. 
Using argument similar to the proof of Proposition 1, we can show that the Lipschitz constraint on $\phi_{\theta}$ also affects $S_K$ and may affect $||\mathscr{L}_p||$. In our experiments, however, we fix the Lipschitz constraints during optimization and do not change $S_K$ and $||\mathscr{L}_p||$ dynamically. Here, we focus on the norm, $R$ from Theorem \ref{complexity}. To obtain the optimal discriminator $f^m_h$, we optimize the following objective with an extra penalization of the upper bound, i.e. $||g||$ on the RKHS norm of $f$:
\begin{align}
\label{aug_obj}
  f^m_h=\underset{f \in h}{\operatorname{argmax}} {\frac{1}{m}\sum_{x_i \sim p(x_i)}\log \sigma(f(x_i))+\frac{1}{m}\sum_{x_j \sim q(x_j)}\log (1-\sigma(f(x_j)))} - \frac{\lambda_0}{m}||g||_{\mathcal{L}_2(d\tau)}^2
\end{align}

The regularization term prevents the radius of RKHS ball from growing, maintaining a low error probability. Optimization of eq. (\ref{aug_obj}) w.r.t. neural network parameters $\theta$ allows dynamic control of the complexity of the discriminator function on the fly in a scalable and efficient way. Note that, computation of $||g||_{\mathcal{L}_2(d\tau)}$ requires randomly sampling $w \sim \mathcal{N}(0, \gamma \textbf{I})$ and passing through neural network $g$ independent of the data $x_i, x_j$. \textit{Therefore, if the computational complexity of optimization is O(m), it will remain the same after incorporating this additional term, i.e. regularization does not increase asymptotic time complexity which is linear with the number of samples, m.}
\begin{figure}[tbp]
\begin{center}
\centerline{\includegraphics[width=0.95\linewidth]{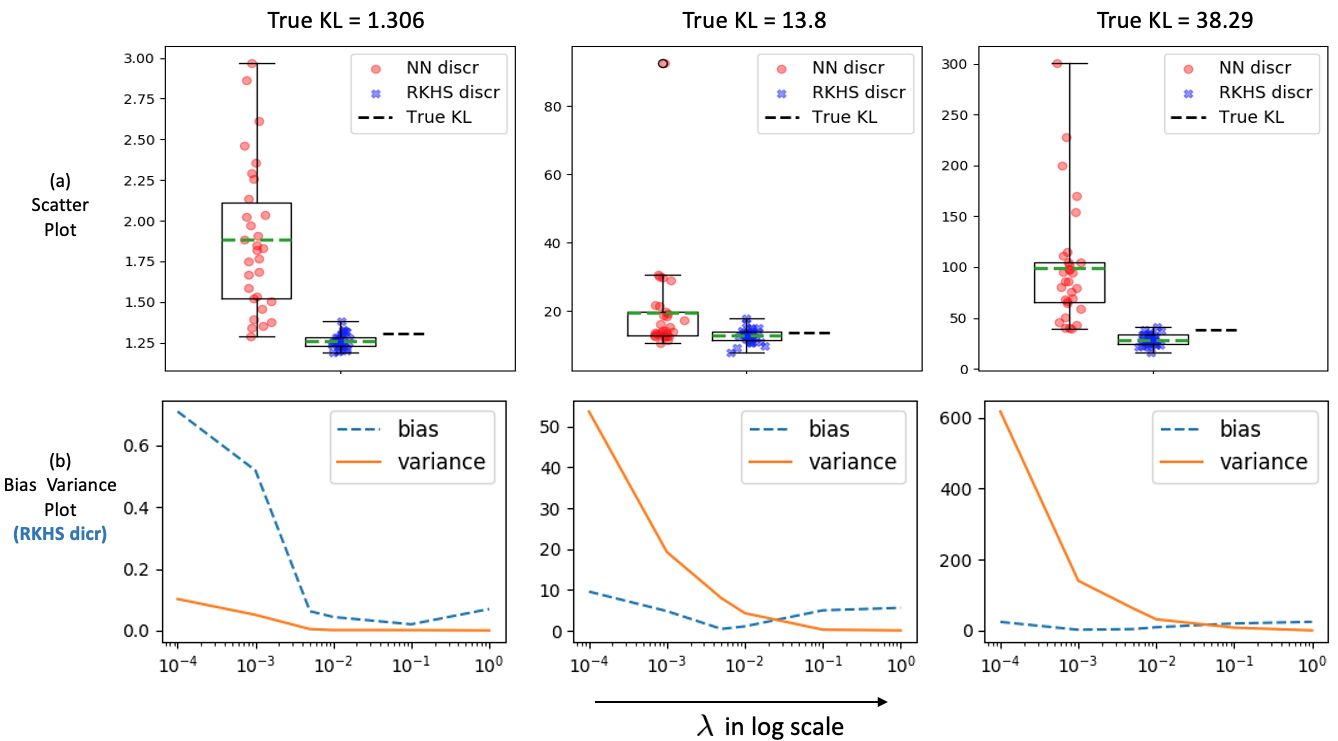}}
\caption{\small{a) Top scatter plot compares KL divergence estimates between a method using Neural network discriminator without complexity control (red) and that using RKHS discriminator with compelxity control (blue); b) In the bottom, we show the effect of varying the regularization parameter $\lambda$ on bias and variance while using the RKHS discriminator with complexity control as in eq.(\ref{aug_obj})}. 
}
\label{three_kl}
\end{center}
\vspace{-.7cm}
\end{figure}
\section{Variance and Consistency of the Estimate}
\label{section_consistency}
\subsection{Variance Analysis}
Theorem \ref{complexity} gives an upper bound on the probability of error. Intuitively, the variance and probability of error behave similarly for many distributions, i.e. higher variance might indicate higher probability of error. Below we quantify this intuition for a Gaussian distributed estimate:
\begin{theorem}
\label{variance}
Let $X = KL_m(f^m_{\mathcal{H}})$ be the estimated KL divergence using m samples as described in Theorem \ref{complexity}. Assuming that $X$ follows a Gaussian distribution $X \sim \mathcal{N}(\mu, \sigma)$, we can obtain an upper bound on this variance of the estimate as follows:
\begin{align}
    \sigma &\leq \frac{\epsilon}{\text{erf}^{-1}\Big[-4\exp\Big[\left( \frac{4RC_p\sqrt{S_p||L_p||}}{\epsilon} \right)^{\frac{2n}{h}}-\frac{m\epsilon^2}{4M^2}\Big] + 1 \Big]}
\end{align}
where erf is the Gauss error function and is a monotonic function.
\end{theorem}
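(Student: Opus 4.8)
This statement is really a repackaging of Theorem~\ref{complexity}: once we commit to a distributional shape for the estimator \(X\), the tail bound there can be turned into a bound on the scale parameter \(\sigma\). For brevity define \(B = \left( \frac{4RC_p\sqrt{S_p\|\mathscr{L}_p\|}}{\epsilon} \right)^{2n/h} - \frac{m\epsilon^2}{4M^2}\) for the bracketed exponent appearing in Theorem~\ref{complexity}. The plan is: (i) identify the Gaussian mean with the population quantity, \(\mu = KL(f^m_{\mathcal{H}})\), so that the event in Theorem~\ref{complexity} is exactly \(\{|X-\mu|\le\epsilon\}\); (ii) evaluate \(\mathrm{Prob}(|X-\mu|\le\epsilon)\) in closed form using that \(X\) is Gaussian; (iii) lower-bound it by the quantity from Theorem~\ref{complexity} and invert the resulting scalar inequality to isolate \(\sigma\).

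\textbf{Key steps.} For \(X \sim \mathcal{N}(\mu,\sigma)\), integrating the Gaussian density over \([\mu-\epsilon,\mu+\epsilon]\) and substituting \(u=(x-\mu)/\sigma\) gives \(\mathrm{Prob}(|X-\mu|\le\epsilon) = \mathrm{erf}\!\left(\epsilon/\sigma\right)\) with the scale convention in force here (up to a harmless rescaling of \(\sigma\) by \(\sqrt2\) if the variance convention is used instead). Theorem~\ref{complexity} states that this same probability is at least \(1 - 2e^{B}\), and since \(1 - 2e^{B} \ge 1 - 4e^{B}\) we may weaken this to \(\mathrm{erf}(\epsilon/\sigma) \ge 1 - 4e^{B}\). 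Because \(\mathrm{erf}\) is strictly increasing with a strictly increasing inverse \(\mathrm{erf}^{-1}\), I would apply \(\mathrm{erf}^{-1}\) to both sides, preserving the inequality, to get \(\epsilon/\sigma \ge \mathrm{erf}^{-1}(1 - 4e^{B})\); rearranging for \(\sigma\) (the inequality direction flips since \(\sigma\) is in the denominator) yields \(\sigma \le \epsilon / \mathrm{erf}^{-1}(1 - 4e^{B})\), which is precisely the claimed bound after substituting \(B\).

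\textbf{Main obstacle.} The delicate point is not any computation but the admissibility of the inversion step: \(\mathrm{erf}^{-1}\) is defined on \((-1,1)\) and is positive only on \((0,1)\), so the argument \(1 - 4e^{B}\) must lie in \([0,1)\) for the displayed bound to be meaningful. This requires the sample term to dominate the kernel-complexity term, i.e. \(\frac{m\epsilon^2}{4M^2} \ge \left( \frac{4RC_p\sqrt{S_p\|\mathscr{L}_p\|}}{\epsilon} \right)^{2n/h} + \ln 4\); when this fails the exponential is at least \(1/4\), the right-hand side is effectively \(+\infty\), and the inequality is vacuously true, so there is nothing to prove in that regime. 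Beyond this bookkeeping the argument is essentially a one-line inversion — the real content, and the thing I would state most prominently, is the Gaussian assumption on \(X\): it is exactly what converts the tail bound of Theorem~\ref{complexity} into control of the spread \(\sigma\), and without it the concentration inequality alone bounds no moment.
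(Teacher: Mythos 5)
Your argument is correct and follows essentially the same route as the paper's own proof: fix $\mu=KL(f^m_{\mathcal H})$, bound the Gaussian probability of the event $\{|X-\mu|\le\epsilon\}$ by the tail estimate of Theorem~\ref{complexity}, and invert the error function. The only differences are cosmetic --- you work with the two-sided probability directly (giving the tighter $\mathrm{erf}(\epsilon/(\sigma\sqrt2))\ge 1-2e^{B}$ before you deliberately weaken $2\to 4$ to match), whereas the paper passes through the one-sided tail $\mathrm{Prob}(X-\mu\ge\epsilon)=\tfrac12\bigl(1-\mathrm{erf}(\epsilon/(\sigma\sqrt2))\bigr)$ and picks up the $4$ that way; your explicit remark on the admissibility of $\mathrm{erf}^{-1}$ (that $1-4e^{B}$ must lie in $(0,1)$, i.e.\ the $m\epsilon^2/4M^2$ term must dominate, with the bound vacuous otherwise) and your flag of the $\sqrt2$ normalization both correctly fill small gaps the paper leaves silent.
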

Obviously, this relation applies only to Gaussian distributed estimate, a strong assumption. However, Theorem \ref{variance} is presented for illustrative purpose. 
It suggests that by decreasing $R$, the radius of the RKHS ball, the variance of the estimate could be decreased. Experimentally, we observe that the variance decreases as we penalize the RKHS norm more, consistent with the spirit of Theorem \ref{variance}.
\vspace{-0.1cm}
\subsection{Consistency of Estimates}
Here we show that the regularized objective leads to a consistent estimation.
\begin{theorem}
Let $f^*$ and $f^m$ be optimal discriminators as described in eq. (\ref{gan_kl}) and eq. (\ref{aug_obj}) respectively, and the  KL estimate is given by $KL(f)=E_{p(x)}[f(x)],   \hspace{0.4cm}  KL_m(f)=\frac{1}{m}\sum_{x_i \sim p(x_i)}[f(x)]$. Then, in the limiting case as $m \to \infty$, $|KL_m(f^m_h)-KL(f^*)| \to 0$.
\end{theorem}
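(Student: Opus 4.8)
The plan is to route everything through the three-term decomposition already recorded in eq.~(\ref{total_error}),
\begin{align*}
KL_m(f^m_h)-KL(f^*)=\big(KL_m(f^m_h)-KL(f^m_h)\big)+\big(KL(f^m_h)-KL(f^*_h)\big)+\big(KL(f^*_h)-KL(f^*)\big),
\end{align*}
where $f^*_h$ is the maximizer over $h$ of the population GAN objective of eq.~(\ref{finite_opt}), and to show that each of the three summands converges to $0$ (in probability) as $m\to\infty$; the claim then follows from the triangle inequality. The backbone is a uniform law of large numbers over $h$: by assumption A1 the domains $\mathcal{X}$ and $\mathcal{W}$ are compact and by A2 the networks $\phi_{\theta}$ and $g$ are uniformly Lipschitz (enforced by spectral normalization), so the RKHS discriminators $f(x)=\int_{\mathcal{W}} g(w)\phi_{\theta}(x)^{\top}w\,d\tau(w)$ form a uniformly bounded, uniformly Lipschitz, hence sup-norm totally bounded, family on $\mathcal{X}$. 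Empirical process theory then yields $\sup_{f\in h}|KL_m(f)-KL(f)|\to 0$ and $\sup_{f\in h}|V_m(f)-V(f)|\to 0$, where $V$ and $V_m$ denote the population and empirical GAN objectives of eq.~(\ref{finite_opt}) and eq.~(\ref{kl_h}).

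For the \emph{deviation-from-mean} term, since $f^m_h\in h$ we get $|KL_m(f^m_h)-KL(f^m_h)|\le\sup_{f\in h}|KL_m(f)-KL(f)|\to 0$; alternatively one can feed the radius of the RKHS ball occupied by $f^m_h$ into Theorem~\ref{complexity}, using that the $\frac{\lambda_0}{m}\|g\|^2$ penalty in eq.~(\ref{aug_obj}) (compared against the objective value at $f\equiv0$) prevents that radius from exploding and that, since Lemma~\ref{covering number} assumes $h>n$, the positive exponent is sub-dominant, so the bound there tends to $1$. For the \emph{discriminator-induced} term, $f^m_h$ maximizes $V_m(f)-\frac{\lambda_0}{m}\|g\|^2$ over $h$ while $f^*_h$ maximizes $V$; combining $\sup_{f\in h}|V_m(f)-V(f)|\to 0$ with $\frac{\lambda_0}{m}\sup_{f\in h}\|g\|_{\mathcal{L}_2(d\tau)}^2\to 0$ (finite by Proposition~1) gives $V(f^m_h)\to\sup_{f\in h}V(f)=V(f^*_h)$, so $f^m_h$ is an asymptotic maximizer of the population objective. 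Because that objective is strongly concave in the pointwise discriminator value around its maximizer (where $\sigma(f^*_h)$ is pinned toward $p/(p+q)$), near-optimality in $V$ forces closeness to $f^*_h$, and since $KL(\cdot)=E_{p(x)}[\cdot]$ is continuous in sup-norm we conclude $KL(f^m_h)\to KL(f^*_h)$.

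The \emph{bias} term $KL(f^*_h)-KL(f^*)$, with $f^*=\log(p/q)$ and $KL(f^*)=KL(p\|q)$, vanishes under a richness hypothesis on the RKHS-constructed discriminator class: either $\log(p/q)$ is assumed to lie in the sup-norm closure of that class, or one lets the widths of $\phi_{\theta}$ and $g$ grow with $m$ so that universal approximation gives $\sup_{f\in h_m}V(f)\to\sup_f V(f)$ and hence $KL(f^*_{h_m})\to KL(f^*)$. Summing the three bounds proves the statement. I expect the main obstacle to be the discriminator-induced term: going from ``objective values converge'' to ``$E_{p(x)}[f^m_h]\to E_{p(x)}[f^*_h]$'' is not automatic because $\arg\max$ is discontinuous in general, so one genuinely needs identifiability of the population maximizer together with a quantitative suboptimality-to-distance estimate (local strong concavity of the log-sigmoid objective), which in turn typically requires the densities to be bounded away from $0$ and $\infty$ on a shared compact support --- the same regime that also makes the bias term controllable.
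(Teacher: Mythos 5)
Your proposal follows essentially the same route as the paper: the same three-term decomposition from eq.~(\ref{total_error}), the bias killed by a richness assumption on $h$, the deviation-from-mean term handled by the concentration bound of Theorem~\ref{complexity}, and the discriminator-induced term handled by showing $\tfrac{\lambda_0}{m}\|g\|^2\to 0$, proving uniform convergence of the empirical to the population GAN objective via a Glivenko--Cantelli/entropy-number argument for the Lipschitz class $\log\circ\sigma\circ f$, and invoking concavity of the objective to pass from objective convergence to argmax convergence. The caveat you flag at the end --- that concavity alone does not make the $\arg\max$ map continuous, so one really needs identifiability plus a quantitative (locally strongly concave) suboptimality-to-distance estimate --- is a genuine soft spot, and it is present in the paper's own proof as well, where the step from ``$\sup_f|G_m(f)-G(f)|\to 0$'' to ``$\|f^m_h-f^*_h\|_\infty\to 0$'' is asserted from concavity without a strong-concavity or uniqueness argument.
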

\begin{proof}[Proof Sketch]
The difference between the true KL divergence and the estimated KL divergence can be divided into three terms as shown in eq. (\ref{total_error}). We assume that our function space is rich enough to contain the true solution, driving bias to zero. From Theorem \ref{complexity}, we see that in the limiting case of $m \to 0$, the deviation-from-mean error goes to $0$.  Therefore, the key step that remains to be shown is that the discriminator induced error (second term in eq.(\ref{total_error})) also goes to 0 as $m \to \infty$.

It can be shown if we can prove that the optimal discriminator in eq. (\ref{aug_obj}) approaches the optimal discriminator in eq. (\ref{finite_opt}). To prove this, we show that the argument being maximized by $f^m_h$ approaches the argument being maximized by $f_h^*$ in the limiting case. To show this, we need to show that the function space, $\log \sigma f$, is Glivenko Cantelli \cite{van1996weak}, which we prove in following steps:\\
1. We show that $f$ is Lipschitz continuous by definition and due to Lipschitz continuity of $\phi_{\theta}$. Then we show that $\log \sigma f$ is Lipschitz continuous if $f$ is Lipschitz continuous.  \\
2. Then we show that for a class of functions with Lipschitz constant $L$, the metric entropy, $\log N$, can be obtained in terms of $L$ and entropy number of the bounded input space, $\mathcal{X}$. \\
3. Since the metric entropy does not grow with the number of samples $m$, we show that $\frac{1}{m} \log N \to 0$ which lets us show that $\log \sigma f$ belongs to Glivenko Cantelli class of functions by using Theorem 2.4.3 from \cite{van1996weak}. See supplementary material for the complete proof.
\end{proof}

\section{Experimental Results}
We present results on three applications of KL divergence estimation: 1. KL estimation between simple Guaussian distributions, 2. Mutual information estimation, 3. Variational Bayes. In our experiments, the RKHS discriminator is constructed with $\psi$ and $g$ networks as described in Section \ref{construct_rkhs}, where the network $\psi$ is very close to a regular neural network. In two experiments, we compare our results with the models using regular neural net discriminator to ensure that the difference in performance between RKHS and regular neural network is not due to architectural difference. 

\begin{figure}[tbp]
\begin{center}
\centerline{\includegraphics[width=\linewidth]{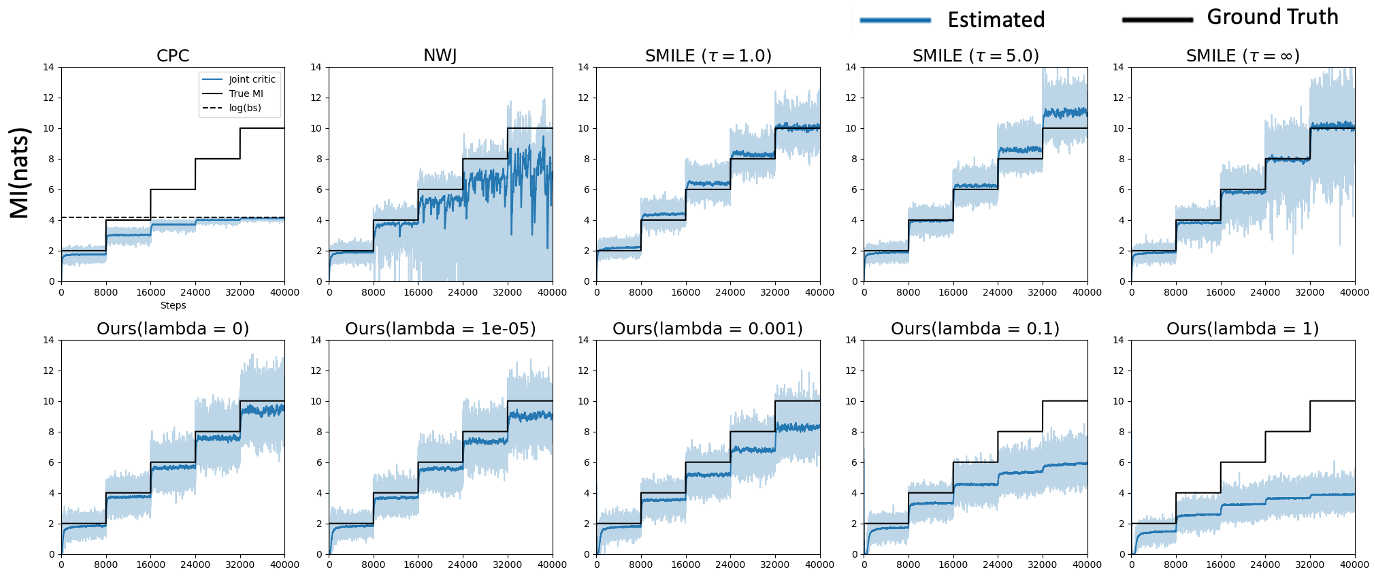}}
\caption{\small{Comparing our method with CPC \cite{oord2018representation}, convex risk minimization(NWJ) \cite{nguyen2010estimating} and SMILE \cite{song2019understanding}} regarding mutual information estimation between two variables. }
\label{mi}
\end{center}
\vspace{-.8cm}
\end{figure}
\paragraph{KL Estimation between Two Gaussians}
We assume that we have finite sets of samples from two distributions. We further assume that we are required to apply minibatch based optimization.  We consider estimating KL divergence between two Gaussian distributions in 2D, where we know the {analytical} KL divergence between the two distributions as the ground truth. We consider three different pairs of distributions corresponding to true KL divergence values of $1.3, 13.8$ and $38.29$, respectively and use $m=5000$ samples from each distribution to estimate KL in the finite case.
We repeat the estimation experiments with random initialization 30 times and report the mean, standard deviation, scatter and box plots. 


Fig.~\ref{three_kl} top row compares the estimation of KL divergence with regular neural net and RKHS discriminator with complexity control based on eq. (\ref{aug_obj}). With our proposed RKHS discriminator, the KL estimates are significantly more reliable and accurate: error reduced from 0.5 to 0.04, 5.8 to 1.07 and 60.6 to 9.7 and variance reduced from 0.2 to 0.002, 223 to 4.4 and 3521 to 33 for true KL 1.3, 13.8 and 38.29 respectively. In Fig.~\ref{three_kl} bottom row, we investigate our complexity control method on the effect of varying the regularization parameter $\lambda = \lambda_0/m$. As expected, increasing regularization parameter penalizes more on the RKHS norm and therefore reduces variance. This is consistent with our theory. Regarding bias, however, as we increase the $\lambda$, the bias decreases and then starts to increase. Hence, one needs to strike a balance between bias and variance while choosing $\lambda$.



\paragraph{Mutual Information Estimation}
Computation of mutual information is a direct use case of KL divergence computation. We replicate the experimental setup of \cite{poole2019variational, song2019understanding} to estimate mutual information between $(x,y)$ drawn from 20-d Gaussian distributions, where the mutual information is increased by step size of 2 from 2 to 10. We compare the performance of our method with traditional KL divergence computation methods like contrastive predictive coding (CPC) \cite{oord2018representation}, convex risk minimization (NWJ) \cite{nguyen2010estimating} and  SMILE \cite{song2019understanding}. 
In Fig.\ref{mi}, our method with RKHS discriminator (with $\lambda = 1e^{-5}$) performs better than CPC \cite{oord2018representation} and NWJ \cite{nguyen2010estimating}, and is competitive with the state-of-the-art, SMILE \cite{song2019understanding}.
In the bottom row, we also show the effect of regularization parameter $\lambda$ in our method. Similar to the previous experiment, increasing the regularization parameter decreases the variance and increases the bias. It is consistent with our theoretical insights about the effect of reducing RKHS norm on variance.
\begin{figure}[tbp]
\begin{center}
\centerline{\includegraphics[width=0.90\linewidth]{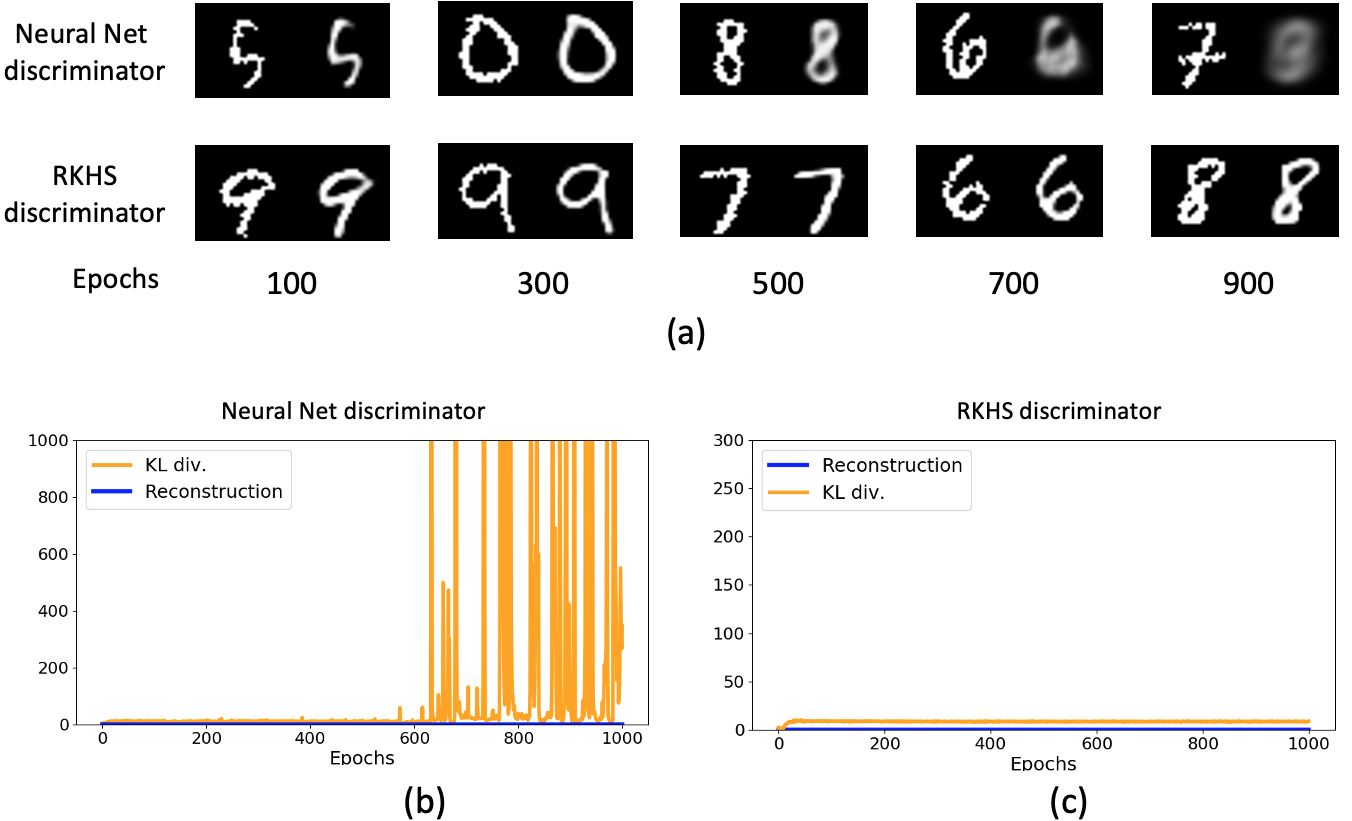}}
\caption{\small{(a) Comparison of MNIST digit reconstruction using AVB autoencoder model \cite{Mescheder2017ICML}. Trace of KL divergence and reconstruction loss in AVB model with Neural network discriminator (b) and RKHS discriminator in (c).}}
\label{mnist_figures}
\end{center}
\vspace{-.8cm}
\end{figure}
\paragraph{Adversarial Variational Bayes}
Variational Bayes requires KL divergence estimation. When we do not have access to analytical form of the posterior/prior distributions, but only have access to the samples, we need to estimate KL divergence from samples. Adversarial Variational Bayes (AVB) \cite{Mescheder2017ICML} presents a way to achieve this using a discriminator network. We adopt this setup and demonstrate that the training becomes unstable if we do not constrain the complexity of the discriminator. 
First, we train AVB on MNIST dataset with a simple neural network discriminator architecture. As the training progresses, the KL divergence blows up after about 500 epochs (Fig. \ref{mnist_figures}(b)) and the reconstruction starts to get worse (Fig. \ref{mnist_figures}(a)). We modify the same architecture according to our construction such that it lies in RKHS and then penalize the RKHS norm as in eq. (\ref{aug_obj}). It stabilizes the training for a large number of epochs and the reconstruction does not deteriorate as the training progresses, resulting into sharp reconstruction (Fig. \ref{mnist_figures}(a)). We want to clarify that this instability in training neural net discriminator is present if we use a basic discriminator architecture. It does not mean that there exists no other method to design a stable neural net discriminator. In fact, AVB \cite{Mescheder2017ICML} presents a discriminator that adds additional inner product structure to stabilize the discriminator training. Our point here is that we can stabilize the training by ensuring that the discriminator lies in a well behaved function space (the RKHS) and controlling its complexity, consistent with our theory.

\section{Limitations, Discussion and Conclusion}
\paragraph{Limitations:} The proposed construction of neural function in RKHS exhibits good properties of both the deep learning and kernel methods. However, it requires constructing two separate deep networks, $\psi$ and $g$. It makes our model a bit bulky and also requires more parameter due to additional $g$. Moreover, currently our RKHS discriminator's output is scalar; generalizing this function to a multivariable output could make our model bulkier and increase parameters even more. 
Second limitation is the requirement of higher order derivative of kernel $K$ in assumption A3. While this requirement is satisfied if smooth activation function is used in $\phi_{\theta}$, for activations like ReLU or LeakyReLU, the derivatives exist everywhere except at the origin. In these cases, we need to carefully investigate if we can use subgradients to define operator norm $||\mathscr{L}_p||$.
\paragraph{Discussion and Conclusion:}
We have shown that using a regular neural network as a discriminator in estimating KL divergence results in unreliable estimation if the complexity of the function space is not controlled.  We then showed 
a solution by constructing a discriminator function in RKHS space using neural networks and penalizing its complexity in a scalable way. Although the idea to use RKHS norm to penalize complexity is not new (see for example \cite{nguyen2010estimating}), it is not clear how to use this idea directly on the function $f$.
In traditional kernel methods, algorithms often do not work with RKHS function $f$ directly, but rather work with kernel matrix, $K$ by using, for example, the Representer Theorem \cite{scholkopf2002learning}. In the case of big data, working with the big kernel matrix is computationally expensive although some methods have been proposed to speed up the computation, like Random Fourier Feature \cite{rahimi2007random}. We propose a different view by directly constructing a function in RKHS space, which led us to scalable algorithm while incorporating the advantages of neural networks. Moreover, our representation could also be seen as an improvement over RFF by using neural basis, $\psi$, instead of Fourier basis.
The idea of constructing a {neural-net} function in RKHS and complexity control could also be useful in 
stabilizing GANs in general. 
Currently, the most successful way to stabilize GANs is to enforce smoothness by gradient penalization \cite{arjovsky2017wasserstein,improved_wgan,binkowski2018demystifying}. On the light of the present analysis, gradient penalty could also be thought as a way to control the complexity of the discriminator. 

\bibliographystyle{ieeetr}
\bibliography{bibli}

\newpage
\appendix

\section{Problem Formulation and Contribution}
\paragraph{GAN-type Objective for KL Estimation}
Let $f$ be a discriminator, $f:\mathcal{X}\to {\rm I\!R}$. Let $p(x)$ and $q(x)$ be two probability density functions defined over the space $\mathcal{X}$. First, we train a discriminator as:
\begin{align}
\label{gan_form_kl}
    f^*=\underset{f}{\operatorname{argmax}}[{E_{p(x)}\log \sigma(f(x))+E_{q(x)}\log (1-\sigma(f(x)))}] 
\end{align}
where $\sigma$ is the Sigmoid function given by $\sigma(x)=\frac{e^x}{1+e^x}$.
Then the KL divergence $KL(p(x)||q(x))$ is given by:
\begin{align}
\label{kl_expectation}
 KL(p(x)||q(x))=E_{p(x)}[f^*(x)]   
\end{align}
\begin{proof}
The proof is based on similar proofs in \cite{Mescheder2017ICML, sonderby2016amortised} and presented here for the sake of completeness.

We rewrite the objective as :
\begin{align}
    \int p(x) \log \sigma (f(x)) + q(x) \log (1- \sigma(f(x))) dx
\end{align}
This integral is maximum with respect to $f$ if and only if the integrand is maximal for every $x$. As argued in the Proposition 1 of \cite{goodfellow2014generative}, the function
\begin{align}
    t \mapsto a\log(t) + b \log(1-t)
\end{align}
attains its maximum at $t = \frac{a}{a+b}$ showing that,
\begin{align}
    \sigma(f^*(x)) = \frac{p(x)}{p(x)+q(x)}
\end{align}
Plugging the expression for Sigmoid function, we obtain,
\begin{align}
    f^*(x) = \frac{p(x)}{q(x)}
\end{align}
Therefore, by the definition of KL divergence, we have:
\begin{align}
    KL(p(x)||q(x))=E_{p(x)}[\frac{p(x)}{q(x)}] = E_{p(x)}[f^*(x)] 
\end{align}
\end{proof}

\section{Error Analysis and Control}
We start with the set of assumptions based on which our theory is developed.
\begin{enumerate}
    \item[A1.] The input domains $\mathcal{X}$ and $\mathcal{W}$ are compact. 
    \item[A2.] The functions $\phi_{\theta}$ and $g$ are Lipschitz continuous with Lipschitz constant $L_{\phi}$ and $L_g$ respectively.
    \item[A3.] Higher order derivatives $D_x^{\alpha}K(x,t)$ of kernel $K$ exist up to some high order $\tau =h/2$ .
\end{enumerate}
\vspace{0.5cm}

\begin{proposition}
Under the assumptions A1, A2, we have \\
i) $\underset{x,t}{sup}$ \hspace{0.1cm} $K_{\theta}(x,t) < \infty$, and \\
ii) $||g||^2_{\mathcal{L}_2(d\tau)} < \infty$.
\end{proposition}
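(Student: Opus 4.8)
The plan is to reduce both claims to the elementary fact that a Lipschitz function on a compact (hence bounded) set is bounded, and then combine this with the explicit closed form of the kernel and with the fact that $d\tau$ is a probability measure.

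For part (i), I would first establish that $\phi_{\theta}$ is uniformly bounded on $\mathcal{X}$. Fix any reference point $x_0 \in \mathcal{X}$; by assumption A2, for every $x \in \mathcal{X}$ we have $\|\phi_{\theta}(x)\| \le \|\phi_{\theta}(x_0)\| + L_{\phi}\|x - x_0\| \le \|\phi_{\theta}(x_0)\| + L_{\phi}\operatorname{diam}(\mathcal{X})$, and the right-hand side is finite because A1 makes $\mathcal{X}$ bounded. Call this bound $B_{\phi} < \infty$. Then, using the closed form $K_{\theta}(x,t) = \gamma\,\phi_{\theta}(x)^T\phi_{\theta}(t)$ from eq.~(\ref{define_K}) together with Cauchy--Schwarz, $|K_{\theta}(x,t)| \le \gamma\|\phi_{\theta}(x)\|\,\|\phi_{\theta}(t)\| \le \gamma B_{\phi}^2$ for all $x,t \in \mathcal{X}$, which gives $\sup_{x,t} K_{\theta}(x,t) \le \gamma B_{\phi}^2 < \infty$.

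For part (ii), the same Lipschitz-plus-compactness argument applied to $g$ on $\mathcal{W}$ yields a finite constant $B_g := \sup_{w \in \mathcal{W}}|g(w)| < \infty$ (using A2 for $g$ and A1 for $\mathcal{W}$). Since $d\tau$ is a Borel probability measure, $\|g\|_{\mathcal{L}_2(d\tau)}^2 = \int_{\mathcal{W}} g(w)^2\, d\tau(w) \le B_g^2 \int_{\mathcal{W}} d\tau(w) = B_g^2 < \infty$. If one instead wishes to keep $d\tau = \mathcal{N}(0,\gamma\mathrm{I})$ on all of $\mathbb{R}^D$, the conclusion still follows by writing $|g(w)| \le |g(0)| + L_g\|w\|$ and invoking the finite second moment of the Gaussian, but under A1 the bounded-domain argument is the cleanest.

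There is no genuine obstacle in this proposition; the one point to be careful about is the interpretation of the supremum in (i) — it is over $x,t \in \mathcal{X}$ for a \emph{fixed} parameter $\theta$, not over $\theta$ itself (a uniform-in-$\theta$ bound would additionally require the Lipschitz constants, or the parameter set, to be controlled uniformly, which is what the spectral normalization in the experiments is for). I would record the constants $B_{\phi}$ and $B_g$ explicitly in the write-up, since the same estimates are exactly what is needed later to bound the kernel complexity $S_K(\theta)$ and, through its derivatives, the Sobolev operator norm $\|\mathscr{L}_p\|$ appearing in Theorem~\ref{complexity}.
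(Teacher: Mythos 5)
Your proof is correct and follows essentially the same route as the paper (Lipschitz continuity plus boundedness of the domain, then Cauchy--Schwarz for the kernel); in fact it is slightly more careful, since the paper passes directly from $\|\phi_{\theta}(x)\|\|\phi_{\theta}(t)\|$ to $L_{\phi}\|x\|L_{\phi}\|t\|$ and from $|g(w)|$ to $L_g\|w\|$, which implicitly assumes $\phi_{\theta}(0)=0$ and $g(0)=0$, whereas your anchoring at a reference point $x_0$ (and the analogous $|g(w)|\le|g(0)|+L_g\|w\|$) removes that hidden assumption. One small divergence worth flagging: for (ii) the paper actually uses the Gaussian second-moment bound $\|g\|^2_{\mathcal{L}_2(d\tau)}\le L_g^2\,\mathrm{tr}(C_w)$ rather than the compact-$\mathcal{W}$ argument, which is the cleaner of your two alternatives precisely because it does not rely on the somewhat inconsistent assumption A1 that $\mathcal{W}$ be compact while simultaneously taking $d\tau=\mathcal{N}(0,\gamma\mathrm{I})$ supported on all of $\mathbb{R}^D$.
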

\begin{proof}
i) By the definition $K_{\theta}(x, t) = \gamma \langle \phi_{\theta}(x), \phi_{\theta}(t) \rangle$. Using Cauchy Schwartz,
\begin{align}
    K_{\theta}(x, t) &\leq \gamma  ||\phi_{\theta}(x)|| ||\phi_{\theta}(t)|| \\
    &\leq  \gamma L_{\phi}||x|| L_{\phi}||t||\\
    &< \infty
\end{align}
where we used the fact that $\mathcal{X}$ is bounded, and therefore, $||x||$ and $||t||$ are finite.\\
ii) By definition,
\begin{align}
    ||g||^2_{\mathcal{L}_2(d\tau)} &= \int g(w)^2 d\tau(w)\\
    & \leq \int L_{g}^2 ||w||^2 d\tau(w)\\
    & = L_g^2 tr(C_w)
\end{align}
where $C_w$ is the uncentered covariance matrix of the Gaussian distributed w. Therefore, we immediately obtain $||g||^2_{\mathcal{L}_2(d\tau)} < \infty$.
\end{proof}

These results are useful in constructing a function $f$ in RKHS in Theorem 1 (Section 5) of the main paper.
\subsection{Bounding the Error Probability of KL Estimates}
We bound the deviation-from-mean error in two steps: 1) we derive a bound for a fixed kernel, 2) we take supremum of this bound over all the kernels parameterized by $\theta$. 

For a fixed kernel, we first bound the probability of deviation-from-mean error 
in terms of the covering number 
in Lemma \ref{sample_complexity_ap}. Then, we use an estimate of the covering number of RKHS due to \cite{cucker2002mathematical} to obtain a bound of error probability in terms of the kernel $K_{\theta}$ in Lemma \ref{complexity_ap}. Note that, Lemma \ref{complexity_ap} is proved for a fixed kernel $K_{\theta}$, where $\theta$ is fixed. Then finally in Theorem \ref{error_prob}, we take supremum over all kernels $K_{\theta}$s to obtain a bound on error probability on a space of functions with all possible kernels.

\begin{customlemma}{1}
\label{sample_complexity_ap}
Let $f^m_{\mathcal{H}_K}$ be the optimal discriminator function in a RKHS $\mathcal{H}_{K}$ which is M-bounded. Let ${KL}_m(f^m_{\mathcal{H}_K})=\frac{1}{m}\sum_i f^m_{\mathcal{H}_K}(x_i)$ and $KL(f^m_{\mathcal{H}_K}) = E_{ p(x)}[f^m_{\mathcal{H}_K}(x)]$ be the estimate of KL divergence from m samples and that by using true distribution $p(x)$ respectively.
Then the probability of error at some accuracy level, $\epsilon$ is lower-bounded as:
\begin{align}
\nonumber
    \text{Prob.}(&|{KL}_m(f^m_{\mathcal{H}_K})-{KL}(f^m_{\mathcal{H}_K})|\leq \epsilon) 
    \geq 1-2\mathcal{N}(\mathcal{H}_K, \frac{\epsilon}{4\sqrt{S_K}})\exp(-\frac{m\epsilon^2}{4M^2})
\end{align}
where $\mathcal{N}(\mathcal{H}_K,\eta)$ denotes the covering number of a RKHS space $\mathcal{H}_K$ with disks of radius $\eta$, and $S_K=\underset{x,t}{sup} $\hspace{0.1cm} ${K(x,t)}$ which we refer as kernel complexity
\end{customlemma}
\begin{proof}
Let $\ell_z(f)=E_{p(x)}[f(x)]-\frac{1}{m}\sum_i f(x_i)$ denotes the error in the estimate such that we want to bound $|\ell_z(f)|$. We have,
\begin{align*}
    &\ell_z(f_1)-\ell_z(f_2)
    = E_{p(x)}[f_1(x)- f_2(x)]-\frac{1}{m}\sum_i f_1(x_i) -  f_2(x_i)
\end{align*}
We know $E_{p(x)}[f_1(x)- f_2(x)]\leq ||f_1-f_2||_\infty$ and $\frac{1}{m}\sum_i f_1(x_i) -  f_2(x_i) \leq ||f_1-f_2||_\infty$.
Using the triangle inequality, we obtain
$
| \ell_z(f_1)-\ell_z(f_2) | \leq 2||f_1-f_2||_\infty
$. Now, consider $f\in \mathcal{H}_K$, then,
\begin{align}
    |f(x)|=|\langle K_x, f \rangle| \leq ||f||||K_x||=||f||\sqrt{K(x,x)}
\end{align}
This implies the RKHS space norm and $\ell_\infty$ norm of a function are related by 
\begin{align}
\label{sup_rkhs}
    ||f||_\infty \leq \sqrt{S_K}||f||_{\mathcal{H}_K}
\end{align}
Hence, we have: 
\begin{align}
\label{lipschitz}
    | \ell_z(f_1)-\ell_z(f_2) | \leq 2\sqrt{S_K}||f_1-f_2||_{\mathcal{H}_K}
\end{align}
The idea of the covering number is to cover the whole RKHS space $\mathcal{H}_K$ with disks of some fixed radius $\eta$, which helps us bound the error probability in terms of the number of such disks. Let $\mathcal{N}(\mathcal{H}_K,\eta)$ be such disks covering the whole RKHS space. Then, for any function $f$ in $\mathcal{H}_K$, we can find some disk, $D_j$ with centre $f_j$, such that $|| f-f_j ||_{\mathcal{H}_K} \leq \eta$. If we choose $\eta= \frac{\epsilon}{2\sqrt{S_K}}$, then from eq.(\ref{lipschitz}), we obtain,
\begin{align}
    \label{2eps}
    \underset{f\in D_j}{sup}{| \ell_z(f)| \geq 2\epsilon} \implies  | \ell_z(f_j)| \geq \epsilon
\end{align}
Using the Hoeffding's inequality,\hspace{0.1cm}
$
    \text{Prob.}(|\ell_z(f_j)|\geq \epsilon )\leq 2e^{-\frac{m\epsilon^2}{2M^2}}
$
and eq.(\ref{2eps}),
\begin{align}
    &\text{Prob.}(\underset{f\in D_j}{sup}{| \ell_z(f)| \geq 2\epsilon} )\leq 2e^{-\frac{m\epsilon^2}{2M^2}}
\end{align}

Applying union bound over all the disks, we obtian,
\begin{align}
    &\text{Prob.}(\underset{f\in \mathcal{H}}{sup}{| \ell_z(f)| \geq 2\epsilon} )\leq 2\mathcal{N}(\mathcal{H},\frac{\epsilon}{2\sqrt{S_K}})e^{-\frac{m\epsilon^2}{2M^2}}\\
    \nonumber
    &\text{Prob.}(\underset{f\in \mathcal{H}}{sup}{| \ell_z(f)| \leq \epsilon} )\geq 1- 2\mathcal{N}(\mathcal{H},\frac{\epsilon}{4\sqrt{S_K}})e^{-\frac{m\epsilon^2}{4M^2}}
\end{align}
which proves the lemma. \\

\underline{On M-boundedness of $f^m_{\mathcal{H}_K}$}\\
To prove the lemma, we assumed that $f^m_{\mathcal{H}_K}$ is M bounded. To see why this is reasonable, from eq.\ref{sup_rkhs}, we have $||f^m_{\mathcal{H}_K}||_\infty \leq \sqrt{S_K}||f^m_{\mathcal{H}_K}||_{\mathcal{H}_K} \leq \sqrt{S_K}||g||_{\mathcal{L}_2(d\rho)}$. Therefore, $f^m_{\mathcal{H}_K}$ is bounded if $S_K$ and $||g||_{\mathcal{L}_2(d\rho)}$ are bounded, which is true by Proposition 1.
\end{proof}
\begin{remark}
We derived the error bound based on the Hoeffding's inequality by assuming that our only knowledge about $f$ is that it is bounded. If we have other knowledge, for example, if we know the variance of $f$, we could use Bernstein's inequality instead of Hoeffding's inequality with minimal change to the proof. To the extent we are interested in the contribution of neural network in error bound, however, there is not much gain by using one inequality or the other. Hence, we stick with Hoeffding's inequality and note other possibilities.
\end{remark} 
\begin{remark}
Note that in Lemma 1, the radius of disks are inversely related to the the quantity, $S_K$, meaning that if $S_K$ is high, we would need large number of disks to fill the RKHS space. Hence, it denotes a quantity that reflects the complexity of the RKHS space. We, therefore, term it kernel complexity. Also in eq. \ref{sup_rkhs} and the discussion about the M-boundedness, we see that the maximum value $|f(x)|$ depends on $S_K$, again providing insight into how $S_K$ may control both maximum fluctuation and the boundedness.

\end{remark}
Lemma \ref{sample_complexity_ap} bounds the probability of error in terms of the covering number of the RKHS space. Next, we use Lemma 2 due to \cite{cucker2002mathematical} 
to obtain an error bound in estimating KL divergence with finite samples in Theorem \ref{error_prob}.

\begin{customlemma}{2}[\cite{cucker2002mathematical}]
\label{covering number_ap}
Let $K: \mathcal{X}\times \mathcal{X}\to {\mathbb{R}} $ is a $\mathcal{C}^\infty$ Mercer kernel and the inclusion $I_K:\mathcal{H}_K\xhookrightarrow{}\mathcal{C}(\mathcal{X})$ is the compact embedding defined by $K$ to the Banach space $\mathcal{C}(\mathcal{X})$ . Let $B_R$ be the ball of radius $R$ in RKHS $\mathcal{H}_{K}$. Then $\forall \eta>0, R >0, h>n $, we have
\begin{align}
    \ln \mathcal{N}(I_K(B_R), \eta) \leq \left( \frac{RC_h}{\eta} \right)^{\frac{2n}{h}}
\end{align}
where $\mathcal{N}$ gives the covering number of the space $I_K(B_R)$ with disks of radius $\eta$, and $n$ represents the dimension of inputs space $\mathcal{X}$. $C_h$ is given by 
\begin{align}
C_h=C\sqrt{||\mathscr{L}_K||}
\end{align}
where $L_K$ is a linear embedding from square integrable space $\mathcal{L}_2(d\rho)$ to the Sobolev space $H^{h/2}$ and $C$ is a constant.
\end{customlemma}
To prove Lemma \ref{covering number_ap}, the RKHS space is embedded in the Sobolev Space $H^{h/2}$ using $\mathscr{L}_K$ and then covering number of Sobolev space is used. Thus the norm of $\mathscr{L}_K$ and the degree of Sobolev space, $h/2$, appears in the covering number of a ball in $\mathcal{H}_K$. 
In Lemma \ref{complexity_ap}, we use this Lemma to bound the {estimation error of KL divergence}. 

\begin{customlemma}{3}
\label{complexity_ap}
Let ${KL}(f^m_{\mathcal{H}_{K_{\theta}}})$ and ${KL}_m(f^m_{\mathcal{H}_{K_{\theta}}})$ be the estimates of KL divergence obtained by using true distribution $p(x)$ and $m$ samples respectively and using a fixed kernel, ${K_{\theta}}$ as described in Lemma \ref{sample_complexity}, then the probability of error in the estimation at the error level $\epsilon$ is given by:
\begin{align*}
   \text{Prob.}(&|{KL}_m(f^m_{\mathcal{H}_{K_{\theta}}})-{KL}(f^m_{\mathcal{H}_{K_{\theta}}})|\geq \epsilon) \leq 2\exp\Bigg[\left( \frac{4RC_s\sqrt{S_K(\theta)||\mathscr{L}_s(\theta)||}}{\epsilon} \right)^{\frac{2n}{h}}-\frac{m\epsilon^2}{4M^2}\Bigg]
\end{align*}
\end{customlemma}
\begin{proof}
Lemma \ref{covering number_ap} gives the covering number of a ball of radius $R$ in an RKHS space. In Lemma \ref{sample_complexity_ap}, if we consider the hypothesis space to be a ball of radius $R$, we can apply Lemma \ref{covering number} in it. Additionally, since we fix the radius of disks to be $\eta=\frac{\epsilon}{4\sqrt{S_K}}$ in Lemma \ref{sample_complexity_ap}, we obtain,
\begin{align}
\nonumber
    &\text{Prob.}(|{KL}_m(f^m_{\mathcal{H}_{K_{\theta}}})-{KL}(f^m_{\mathcal{H}_{K_{\theta}}})|\geq \epsilon) \leq 2\exp\Big[\left( \frac{4\sqrt{S_{K_{\theta}}}RC_h}{\epsilon} \right)^{\frac{2n}{h}}-\frac{m\epsilon^2}{4M^2}\Big]\\
\end{align}
Substituting $C_h=C\sqrt{||\mathscr{L}_{K_{\theta}}||}$, we obtain,
\begin{align}
 &\text{Prob.}(|{KL}_m(f^m_{\mathcal{H}_K})-{KL}(f^m_{\mathcal{H}_{K_{\theta}}})|\geq \epsilon) 
    \leq 1-2\exp\Bigg[\left( \frac{4RC\sqrt{S_{K_{\theta}}||\mathscr{L}_{K_{\theta}}||}}{\epsilon} \right)^{\frac{2n}{h}}-\frac{m\epsilon^2}{4M^2}\Bigg]
\end{align}
\end{proof}
\begin{customthm}{2}
\label{error_prob}
Let ${KL}(f^m_{\mathcal{H}})$ and ${KL}_m(f^m_{\mathcal{H}})$ be the estimates of KL divergence obtained by using true distribution $p(x)$ and $m$ samples respectively as described in Lemma \ref{sample_complexity_ap}, then the probability of error in the estimation at the error level $\epsilon$ is given by:
\begin{align*}
   \text{Prob.}(&|{KL}_m(f^m_{\mathcal{H}})-{KL}(f^m_{\mathcal{H}})|\leq \epsilon) \geq 1-2\exp\Bigg[\left( \frac{4RC_p\sqrt{S_p||\mathscr{L}_p||}}{\epsilon} \right)^{\frac{2n}{h}}-\frac{m\epsilon^2}{4M^2}\Bigg]
\end{align*}
where $C_p\sqrt{S_p||\mathscr{L}_p||} = \underset{K_{\theta}}{sup} \hspace{0.2cm} C_s\sqrt{S_K(\theta)||\mathscr{L}_s||}$, i.e. $C_p, S_p, \mathscr{L}_p$ correspond to a kernel for which the bound is maximum.
\end{customthm}
\begin{proof}
Lemma \ref{complexity_ap} gives an error bound for a fixed kernel, $K_{\theta}$. To find an upper bound over all possible kernels, we take the supremum over all kernels.
\begin{align}
\label{sup_K_ap}
   \text{Prob.}(|{KL}_m(f^m_{\mathcal{H}})-{KL}(f^m_{\mathcal{H}})|\geq \epsilon) &\leq \underset{K_{\theta}}{sup} \hspace{0.2cm} \text{Prob.}(|{KL}_m(f^m_{\mathcal{H}_{K_{\theta}}})-{KL}(f^m_{\mathcal{H}_{K_{\theta}}})|\geq \epsilon)\\
   \label{theta_p_ap}
   & \leq 2\exp\Bigg[\left( \frac{4RC_p\sqrt{S_p||\mathscr{L}_p||}}{\epsilon} \right)^{\frac{2n}{h}}-\frac{m\epsilon^2}{4M^2}\Bigg]
\end{align}
where $S_p = S_K(\theta_p)$ and $\mathscr{L}_p = \mathscr{L}_K(\theta_p)$, \textit{i.e.,} $S_p$ and $\mathscr{L}_p$ correspond to kernel complexity and Sobolev operator norm corresponding to optimal kernel $K_{\theta_p}$ that extremizes eq. (\ref{sup_K_ap}). Theorem statement readily follows from eq. (\ref{theta_p_ap})
\end{proof}

\section{Variance and Consistency of the Estimate}
\subsection{Variance Analysis}
\begin{customthm}{3}
\label{variance_ap}
Let $X = KL_m(f^m_{\mathcal{H}})$ be the estimated KL divergence using m samples as described in Theorem \ref{error_prob}. Assuming that $X$ follows a Gaussian distribution $X \sim \mathcal{N}(\mu, \sigma)$, we can obtain an upper bound on this variance of the estimate as follows:
\begin{align}
    \sigma &\leq \frac{\epsilon}{\sqrt{2}\text{erf}^{-1}\Big[-4\exp\Big[\left( \frac{4RC_p\sqrt{S_p||\mathscr{L}_p||}}{\epsilon} \right)^{\frac{2n}{h}}-\frac{m\epsilon^2}{4M^2}\Big] + 1 \Big]}
\end{align}
where erf is the Gauss error function 
\begin{align}
    \text{erf}(x) = \frac{2}{\sqrt{\pi}}\int_0^x e^{-t^2}dt
\end{align}
and it is a monotonic function.
\end{customthm}
\begin{proof}
$X$ follows a Gaussian distribution with mean $\mu$ and variance $\sigma$. Let its cumulative distribution function be $\Phi_{\mu, \sigma}$. By definition,
\begin{align}
    P(X \leq \hat{x}) = \Phi_{\mu, \sigma}(\hat{x})\\
    P(X \geq \hat{x}) = 1 - \Phi_{\mu, \sigma}(\hat{x})\\
    \label{cdf}
    P(X -\mu \geq \epsilon) = 1 - \Phi_{\mu, \sigma}(\mu + \epsilon)
\end{align}
Since two sided probability is higher than one sided, we have,
\begin{align}
    P(X -\mu \geq \epsilon)  &\leq P(|X -\mu| \geq \epsilon) \\
    &\leq 2\exp\Bigg[\left( \frac{4RC_s\sqrt{S_K||\mathscr{L}_s||}}{\epsilon} \right)^{\frac{2n}{h}}-\frac{m\epsilon^2}{4M^2}\Bigg]
\end{align}
where we used Theorem 2.
Using eq.\ref{cdf}, we have,
\begin{align}
    \label{main_exp}
    1 - \Phi_{\mu, \sigma}(\mu + \epsilon) \leq 2\exp\Bigg[\left( \frac{4RC_s\sqrt{S_K||\mathscr{L}_s||}}{\epsilon} \right)^{\frac{2n}{h}}-\frac{m\epsilon^2}{4M^2}\Bigg]
\end{align}
For a Gaussian distribution, we can use the following expression for the cumulative distribution function,
\begin{align}
    \Phi_{\mu, \sigma}(\hat{x}) = \frac{1}{2}\Big[1 + \text{erf} \big(\frac{\hat{x}-\mu}{\sigma \sqrt{2}}\big)\Big]
\end{align}
where erf is the Gauss error function.
Using this in the eq.\ref{main_exp}, 
\begin{align}
    1 - \text{erf} \big(\frac{\epsilon} {\sigma \sqrt{2}}\big) \leq 4\exp\Bigg[\left( \frac{4RC_s\sqrt{S_K||\mathscr{L}_s||}}{\epsilon} \right)^{\frac{2n}{h}}-\frac{m\epsilon^2}{4M^2}\Bigg]\\
    \text{erf} \big(\frac{\epsilon} {\sigma \sqrt{2}}\big) \geq -4\exp\Bigg[\left( \frac{4RC_s\sqrt{S_K||\mathscr{L}_s||}}{\epsilon} \right)^{\frac{2n}{h}}-\frac{m\epsilon^2}{4M^2}\Bigg] + 1 
\end{align}
Since the function erf is invertible within domain (-1,1), we have,
\begin{align}
    \frac{\epsilon} {\sigma \sqrt{2}} &\geq \text{erf}^{-1}\Bigg[-4\exp\Big[\left( \frac{4RC_s\sqrt{S_K||\mathscr{L}_s||}}{\epsilon} \right)^{\frac{2n}{h}}-\frac{m\epsilon^2}{4M^2}\Big] + 1 \Bigg] \\
    \sigma &\leq \frac{\epsilon}{\sqrt{2}\text{erf}^{-1}\Big[-4\exp\Big[\left( \frac{4RC_s\sqrt{S_K||\mathscr{L}_s||}}{\epsilon} \right)^{\frac{2n}{h}}-\frac{m\epsilon^2}{4M^2}\Big] + 1 \Big]}
\end{align}
\end{proof}

\subsection{Consistency of Estimates}

\begin{customthm}{4}
Let $f^*$ and $f^m_h$ and $f^*_h$ be optimal discriminators defined as
\begin{align}
    f^* &=\underset{f}{\operatorname{argmax}}[{E_{p(x)}\log \sigma(f(x))+E_{q(x)}\log (1-\sigma(f(x)))}]\\
    \label{f_star}
    f^*_h &=\underset{f \in h}{\operatorname{argmax}}[{E_{p(x)}\log \sigma(f(x))+E_{q(x)}\log (1-\sigma(f(x)))}] \\
    \label{fm}
    f^m_h &=\underset{f \in h}{\operatorname{argmax}}\Big[{\frac{1}{m}\sum_{x_i \sim p(x_i)}\log \sigma(f(x_i))+\frac{1}{m}\sum_{x_j \sim q(x_j)}\log (1-\sigma(f(x_j)))}\Big]- \frac{\lambda_0}{m}||g||_{\mathcal{L}_2(d\tau)}^2
\end{align}
and the  KL estimate is given by $KL(f)=E_{p(x)}[f(x)],   \hspace{0.4cm}  KL_m(f)=\frac{1}{m}\sum_{x_i \sim p(x_i)}[f(x)]$. Then, in the limiting case as $m \to \infty$, $|KL_m(f^m_h)-KL(f^*)| \to 0$.
\end{customthm}
\begin{proof}
Estimation error can be divided into three terms as 
\begin{align}
\label{total_error_ap}
KL_m(f^m_h)-KL(f^*)=\underbrace{KL_m(f^m_h)-KL(f^m_h)}_{\text{Deviation-from-mean error}}+\underbrace{KL(f^m_h)-KL(f^*_h)}_{\text{Discriminator induced error}}+\underbrace{KL(f^*_h)-KL(f^*)}_{Bias}
\end{align}
Therefore,
\begin{align}
\small
\nonumber
|KL_m(f^m_h)-KL(f^*)|&\leq |{KL_m(f^m_h)-KL(f^m_h)}|+|{KL(f^m_h)-KL(f^*_h)}|\\
&+|{KL(f^*_h)-KL(f^*)}|
\end{align}
To show that the total error goes to zero, we show that each term on the right goes to zero. The last term is the bias and we assume that the RKHS space $h = \mathcal{H}$ we consider consists the true solution, $f^*$. Hence the bias goes to zero.

Using Theorem \ref{error_prob}, it is immediately clear that the first term, $|{KL_m(f^m_h)-KL(f^m_h)}|$ approaches zero in the limiting case as $m \to \infty$.

The only remaining is the second term, $|{KL(f^m_h)-KL(f^*_h)}|$. In 
Theorem \ref{disc_error} we show that this term also goes to zero as $m \to 0$. 
\end{proof}


\begin{customthm}{5}
\label{disc_error}
Let $f^*_h$ and $f^*_h$ be the optimal discriminators as defined in eq. (\ref{f_star}) and eq. (\ref{fm}), and the KL divergence estimate using discriminators learned using finite and infinite samples be $KL(f^m_h)=\int[f^m_h(x)]p(x)dx$ and $KL(f^*_h)=\int[f^*_h(x)]p(x)dx$, where,
Then, in the limiting case, we have
$$\underset{m\to \infty}{lim} |KL(f^m_h)-KL(f^*_h)| = 0 $$
\end{customthm}
\begin{proof}
\begin{align*}
   |KL(f^m_h)-KL(f^*_h)|&=|\int[f^m_h(x)-f^*_h(x)]p(x)dx |\\
   &\leq \underset{x}{sup}{|f^m_h(x)-f^*_h(x)|}=||f^m_h(x)-f^*_h(x)||_{\infty}
\end{align*}
Therefore, we can show $\underset{m\to \infty}{lim} KL(f^m_h)-KL(f^*_h) = 0 $ if $\underset{m\to \infty}{lim} ||f^m_h(x)-f^*_h(x)||_{\infty} = 0$, that is, if the function $f^m_h(x)$ converges uniformly to function $f^*_h(x)$ in the limiting case.

The two maximizer functions are given by 
\begin{align}
f^*_h&=\underset{f \in h}{\text{argmax}}[{E_{p(x)}\log \sigma(f(x))+E_{q(x)}\log (1-\sigma(f(x)))}]\\
f^m_h&=\underset{f \in h}{\text{argmax}}\Big[{\frac{1}{m}\sum_{x_i \sim p(x_i)}\log \sigma(f(x_i))
    +\frac{1}{m}\sum_{x_j \sim q(x_j)}\log (1-\sigma(f(x_j)))}\Big] - \frac{\lambda_0}{m}||g||^2
\end{align}
As a first step in showing that $f^m_h$ uniformly approaches $f^*_h$, we first show that $\underset{m\to \infty}{lim} \frac{\lambda_0}{m}||g||^2 = 0$ in Lemma \ref{regularization_lemma}. 

Then, to prove the rest, let us denote,
\begin{align*}
    G_m(f)&={\frac{1}{m}\sum_{x_i \sim p(x_i)}\log \sigma(f(x_i))
    +\frac{1}{m}\sum_{x_j \sim q(x_j)}\log (1-\sigma(f(x_j)))}\\
    G(f)&={E_{p(x)}\log \sigma(f(x))+E_{q(x)}\log (1-\sigma(f(x)))}
\end{align*}

In Lemma \ref{concavity_lemma}, we prove that functionals $G(f)$ and $G_m(f)$ are concave with respect to function $f$. In the light of these two lemmas, we argue 
\begin{align}
    \underset{m\to \infty}{lim} ||f^m_h(x)-f^*_h(x)||_{\infty} = 0 \hspace{0.2cm}\text{if} \hspace{0.2cm} \underset{m\to \infty}{lim} \underset{f}{\sup}|G_m(f)-G(f)| = 0
\end{align}
Next, we show $\underset{m\to \infty}{lim} \underset{f}{\sup}|G_m(f)-G(f)| = 0$ as follows.
We have,
\begin{align}
\nonumber
    |G_m-G|=&|{\frac{1}{m}\sum_{x_i \sim p(x_i)}\log \sigma(f(x_i))
    +\frac{1}{m}\sum_{x_j \sim q(x_j)}\log (1-\sigma(f(x_j)))}\\
    &- {E_{p(x)}\log \sigma(f(x))+E_{q(x)}\log (1-\sigma(f(x)))}|\\
    \nonumber
    \leq& |\frac{1}{m}\sum_{x_i \sim p(x_i)}\log \sigma(f(x_i))- E_{p(x)}\log \sigma(f(x))|\\
    &+|\frac{1}{m}\sum_{x_j \sim q(x_j)}\log (1-\sigma(f(x_j)))- E_{q(x)}\log (1-\sigma(f(x)))|\\
\nonumber
    \therefore \underset{m\to \infty}{lim} \underset{f}{\sup}&|G_m(f)-G(f)|
    \leq \underset{m\to \infty}{lim} \underset{f}{\sup}|\frac{1}{m}\sum_{x_i \sim p(x_i)}\log \sigma(f(x_i))- E_{p(x)}\log \sigma(f(x))|\\
    &\hspace{1cm}+\underset{m\to \infty}{lim} \underset{f}{\sup}|\frac{1}{m}\sum_{x_j \sim q(x_j)}\log (1-\sigma(f(x_j)))- E_{q(x)}\log (1-\sigma(f(x)))|
\end{align}
Both the terms on right hand side go to zero if $\log \circ \sigma \circ f$ is in a Glivenko Cantelli class of functions using Empirical Process Theory \cite{van1996weak}, which we prove in Lemma \ref{glivenko_cantelli}. That completes the proof.
\end{proof}

\begin{customlemma}{4}
\label{regularization_lemma}
$\underset{m\to \infty}{lim} \frac{\lambda_0}{m}||g||^2 = 0$
\begin{proof}
$||g||_{\mathcal{L}_2(d\rho)}$ is bounded because $g$ is Lipschitz continuous and its domain is bounded. Since, $||g||_{\mathcal{L}_2(d\rho)}$ is bounded, we immediately obtain the required statement.
\end{proof}
\end{customlemma}
\begin{customlemma}{5}
\label{concavity_lemma}
The functional $G(f)$ is concave with respect to function $f$ in the following sense:
$\theta_1G(f_1)+\theta_2G(f_2) \leq G(\theta_1f_1+\theta_2f_2)$ for any $\theta_1,\theta_2 \in (0,1)$ such that $\theta_1+\theta_2=1$. The same is true for $G_m(f).$
\end{customlemma}
\begin{proof}
\begin{align}
\nonumber
    \theta_1G(f_1)+\theta_2G(f_2)&=\theta_1 \Big[\int p(x)\log \sigma(f_1(x))dx+\int q(x)\log (1-\sigma(f_1(x)))dx \Big]\\
    &+\theta_2 \Big[\int p(x)\log \sigma(f_2(x))dx+\int q(x)\log (1-\sigma(f_2(x)))dx \Big]\\
    \nonumber
    &=\int p(x)\Big[\theta_1\log \sigma(f_1(x))dx + \theta_2\log \sigma(f_2(x))dx \Big]\\
    \label{neg_f}
    &+\int q(x)\Big[\theta_1\log \sigma(-f_1(x))dx + \theta_2\log \sigma(-f_2(x))dx \Big]\\
    \label{ineq}
    \nonumber
    &\leq \int p(x)\log \sigma[\theta_1f_1(x) + \theta_2f_2(x)]dx\\
    &+\int q(x)\log \sigma[-(\theta_1f_1(x) + \theta_2f_2(x))]dx\\
    &=G(\theta_1 f_1+ \theta_2 f_2)
\end{align}
where we used the fact that $\log (1-\sigma(f(x)))=\log \sigma(-f(x))$ (this is straightforward using definition of Sigmoid function, $\sigma$) in line \ref{neg_f}. In line \ref{ineq}, we used the fact that $\log \sigma$ is a concave function (see Lemma \ref{log_sigma}).
\end{proof}
\begin{customlemma}{6}
\label{glivenko_cantelli}
$\log \circ \sigma \circ f$ is a Glivenko Cantelli class of function.
\begin{proof}
In Lemma \ref{lip_f}, we show that, by definition, $f$ is Lipschitz continuous with some Lipschitz constant $L_f$. In Lemma \ref{log_sigma} we show that if $f$ is a Lipschitz continuous function from $\mathcal{X}$ to $(-\infty, \infty)$ with Lipschitz constant, $L_f$, then $\log\sigma f$ is a function from $\mathcal{X}$ to $(-\infty,0)$ with same Lipschitz constant $L_f$. Hence, $v = \log \sigma f$ is a a function from $\mathcal{X}$ to $(-r, 0)$. Note that since $\mathcal{X}$ is bounded and $f$ is Lipschitz continuous from $\mathcal{X}$ to $\mathbb{R}$, we can always find some $r$ such that $v$ maps from $\mathcal{X}$ to $(-r, 0)$.

Now, we show that $v = \log \sigma f$ is Glivenko Cantelli by entropy number. Let $\mathcal{V} =\{v : v = \log(\sigma(f)), f \in \mathcal{F}\}$. In Lemma \ref{gc}, we use theorem from \cite{van1996weak} to show that $\mathcal{V}$ is Glivenko Cantelli if and only if
\begin{align}
    \frac{1}{m}\log N(\epsilon, \mathcal{V}_M, \ell_1(\mathbb{P}_m)) \overset{\mathbb{P}}{\to} 0,
\end{align}
for any $M>0, \epsilon$, where $\mathcal{V}_M$ is the class of functions $v\textbf{1}\{E \leq M\}$ where $v$ ranges over $\mathcal{V}$ and $E$ is an envelope function to $\mathcal{V}$.
Since we proved that $\log(\sigma(f)(x)) < 0$ for any $x$, we can choose $E = v_0(x) = \textbf{0}$ as a constant function that is an envelope to $\mathcal{V}$. For any $M>0$, therefore, $1\{E \leq M\} = 1$ trivially and $\mathcal{V}_M =\mathcal{V}$. Hence, we just need to show 
\begin{align}
   \frac{1}{m}\log N(\epsilon, \mathcal{V}, \ell_1(\mathbb{P}_m)) \overset{\mathbb{P}}{\to} 0
\end{align}
In Lemma \ref{entropy_n}, we show that the entropy number of such a function is given by
\begin{align}
    \log \mathcal{N}(\epsilon,\mathcal{V}, \ell_1(\mathbb{P}_m) ) \leq  \left( \frac{16L. diam(\mathcal{X})}{\epsilon} \right)^{ddim(\mathcal{X})} \log \left(\frac{4r}{\epsilon}\right)
\end{align}
and therefore is bounded and independent of the sample size $m$. Hence, $\frac{1}{m}\log N(\epsilon, \mathcal{V}, \ell_1(\mathbb{P}_m))$ goes to $0$.
\end{proof}
\end{customlemma}
\begin{customlemma}{7}
\label{lip_f}
The function $f$ defined in Theorem 1 on the main paper as: 
\begin{align}
     f(x)=\int_{\mathcal{W}}g(w)\psi(x,w)d\tau(w),
\end{align}
where $\psi(x,w)=\phi_{\theta}(x)^Tw$ and the function $\phi_{\theta}$ is Lipschitz continuous with Lipschitz constant $L_{\phi}$. Then, the function $f$ is Lipschitz continuous with some Lipschitz constant, $L_f$.
\end{customlemma}
\begin{proof}
By the definition,
\begin{align}
    f(x) = \langle g(w), \psi(x,w) \rangle_{\mathcal{L}_2(d\tau)}
\end{align}
For any two points $x_1$ and $x_2$, 
\begin{align}
    |f(x_1) - f(x_2)| &= \langle g(w), \psi(x_1,w) - \psi(x_2,w) \rangle_{\mathcal{L}_2(d\tau)}\\
    \label{inner_tau}
    & \leq ||g(w)||_{\mathcal{L}_2(d\tau)} ||\psi(x_1, w) - \psi(x_2, w)||_{\mathcal{L}_2(d\tau)}
\end{align}
where we used Cauchy Schwartz. Now, taking the difference in $\psi$, it can be written as
\begin{align}
    ||\psi(x_1, w) - \psi(x_2, w)||_{\mathcal{L}_2(d\tau)} &= \sqrt{\int [\psi(x_1, w) - \psi(x_2, w)]^2 d\tau(w)}\\
    &= \sqrt{\int [(\phi_{\theta}(x_1) - \phi_{\theta}(x_2))^T w]^2 d\tau(w)}\\
    \label{inner_rd}
    & \leq \sqrt{\int ||\phi_{\theta}(x_1) - \phi_{\theta}(x_2)||^2 ||w||^2 d\tau(w)}
\end{align}
where we again used Cauchy Schwartz in the last line since $[(\phi_{\theta}(x_1) - \phi_{\theta}(x_2))^T w]$ is an inner product in $\mathbb{R}^D$ where D is the dimension of $w$. Since $\phi_{\theta}$ is Lipschitz continuous with Lipschitz constant $L_{\phi}$, we have 
$$
||\phi_{\theta}(x_1) - \phi_{\theta}(x_2)|| \leq L_{\phi} ||x_1 - x_2||
$$
Using this inequality in eq.\ref{inner_rd}, we obtain
\begin{align}
      ||\psi(x_1, w) - \psi(x_2, w)||_{\mathcal{L}_2(d\tau)} &\leq L_{\phi} ||x_1 - x_2|| \sqrt{\int ||w||^2 d\tau(w)}\\
      \label{psi_ineq}
      &= L_{\phi} ||x_1 - x_2|| \sqrt{tr(C_w)}
\end{align}
where, $C_w$ is the uncentered covariance matrix of Gaussian distributed $w$. Plugging eq.(\ref{psi_ineq}) in eq.(\ref{inner_tau}), we obtain
\begin{align}
    |f(x_1) - f(x_2)| \leq ||g(w)||_{\mathcal{L}_2(d\tau)}L_{\phi} \sqrt{tr(C_w)} ||x_1 - x_2|| 
\end{align}
Since, we have that $||g(w)||_{\mathcal{L}_2(d\tau)} < \infty$ (see Lemma \ref{regularization_lemma}), we have proved that $f$ is Lipschitz continuous with Lipschitz constant given by $L_f \leq ||g(w)||_{\mathcal{L}_2(d\tau)}L_{\phi} \sqrt{tr(C_w)}$.
\end{proof}

\begin{customlemma}{8}
\label{log_sigma}
The function $\log \circ \sigma$ exhibits following properties:\\
i) It is  a concave function with its derivative always between $0$ and $1$ \\
ii) If the Lipschitz constant of $f$ is $L_f$, so is the Lipschitz constant of $\log \circ \sigma \circ f$
\begin{proof}
i) Let us denote $u(x) = \log (\sigma(x))$. Then, we have,
\begin{align}
    &u(x) = \log \frac{e^x}{1+e^x} = x - \log (1+e^x)\\
    \therefore &u'(x) = 1 - \frac{e^x}{1+e^x}  = \frac{1}{1+e^x}\\
    \therefore &0 <  u'(x) < 1,  \hspace{0.5cm} \forall x \in (- \infty, \infty)
\end{align}
which proves that the derivative is between $0$ and $1$. To show that $u(x)$ is concave, it is sufficient to note that its second derivative is always negative.

ii) Let us use notation $u = \log (\sigma)$, and let $f_2 = f(x_2)$, $f_1 = f(x_1)$, $u_2 = u(f(x_2))$, $u_1 = u(f(x_1))$. Since the maximum derivative of $u$ is upper bounded by 1, $u$ as a function of $f$ has Lipschitz constant $1$ and therefore, we can write
\begin{align}
  u_2 - u_1 = u(f_2) - u(f_1) \leq f_2 - f_1 &= f(x_2) -f(x_1)  \\
  &  \leq L_f||x_2 -x_1||
\end{align}
where the last inequality is because $f$ is Lipschitz continuous with Lipschitz constant $L_f$. This proves that the Lipschitz constant of $\log \circ \sigma \circ f$ is also $L_f$.

\end{proof}
\end{customlemma}
\begin{customlemma}{9}
\label{entropy_n}
Let $\mathcal{F}_L$ be the space of L-Lipschitz functions mapping the metric space ($\mathcal{X}, \rho$) to [0,r]. Let $ddim(\mathcal{X})$ and $diam(\mathcal{X})$ denote the doubling dimension and diameter of $\mathcal{X}$ respectively. Then,\\
i) the covering numbers of $\mathcal{F}_L$  can be estimated in terms of the covering numbers of $\mathcal{X}$:
\begin{align}
    \mathcal{N}(\epsilon,\mathcal{F}_L, ||.||_{\infty} ) \leq \left(\frac{4r}{\epsilon}\right)^{\mathcal{N}(\epsilon/{8L},\mathcal{X}, ||.||_{\infty} )}
\end{align}
ii) the entropy number of $\mathcal{F}_L$  can be estimated as:
\begin{align}
    \log \mathcal{N}(\epsilon,\mathcal{F}_L, ||.||_{\infty} ) \leq  \left( \frac{16L. diam(\mathcal{X})}{\epsilon} \right)^{ddim(\mathcal{X})} \log \left(\frac{4r}{\epsilon}\right)
\end{align}
iii) the entropy number with respect to $\ell_1(\mathbb{P}_m) = \int |f|d\mathbb{P}_m = \frac{1}{m} \sum_k |f(x_k)|$ defined with respect to the $m$ input points, is the same as (ii), \textit{i.e.}
\begin{align}
    \log \mathcal{N}(\epsilon,\mathcal{F}_L, \ell_1(\mathbb{P}_m) ) \leq  \left( \frac{16L. diam(\mathcal{X})}{\epsilon} \right)^{ddim(\mathcal{X})} \log \left(\frac{4r}{\epsilon}\right)
\end{align}
where $\mathbb{P}_m$ is an empirical probability measure with respect to $m$ inputs points in $\mathcal{X}$.

\begin{proof}
The proof is adapted from \cite{Kontorovich2014} Lemma 2 and \cite{gottlieb} Lemma 6, and modified to handle range $[0,r]$.

i) We first cover the domain $\mathcal{X}$ by $N$ balls ${U_1, U_2, ..., U_{|N|}}$, where $N = \mathcal{N}(\epsilon/{8L},\mathcal{X}, ||.||_{\infty} )$ is the covering number of $\mathcal{X}$, $N = \{x_i \in U_i\}^{|N|}_{i=1}$ is a set of center points of $|N|$ balls and $\epsilon' = \epsilon/{8L}$ is the radius of the covering balls.

Now, our strategy is to construct an $\epsilon$ cover $\hat{F}=\{ \hat{f_1},..., \hat{f}_{| \hat{F}|} \}$ for $\mathcal{F}_L$ with respect to $||.||_{\infty}$. To do so, at every point $x_i \in N$, we choose the value of $\hat{f}(x_i)$ to be some multiple of $2L\epsilon' = \frac{\epsilon}{4}$, while maintaining $||\hat{f}||_{Lip} \leq 2L$. We then construct a 2L-Lipschitz extension for $\hat{f}$ from $N$ to all over $\mathcal{X}$(note that such an extension always exists, see \cite{mcshane,whitney_extension}). 

With this construction, we can show that every $f \in \mathcal{F}_L$ is close to some $\hat{f} \in \hat{F}$ in the sense that $||f - \hat{f}||_{\infty} \leq \epsilon$. To show this, note the following:
\begin{align}
    |f(x) - \hat{f}(x)| & \leq |f(x) - f(x_N)| + |f(x_N) - \hat{f}(x_N)| + |\hat{f}(x_N) - \hat{f}(x)|\\
    \label{lips}
    & \leq L.\rho(x, x_N)+\epsilon / 4 +2L. \rho(x, x_N) \\
    & \leq \epsilon
\end{align}
where the inequality in eq.\ref{lips} is due to the fact that $f$ is $L$-Lipschitz and $\hat{f}$ is $2L$-Lipschitz and since we have covered the input space $\mathcal{X}$, each $x$ is within $\epsilon'$ of some $x_N$. Also note that for every $f(x_N)$ we can find $\hat{f}(x_N)$ within some radius $\epsilon /4$; this is because we choose $f(x_N)$ to be some multiple of $2L\epsilon'$. Finally, we need to compute the cardinality of $\hat{F}$, i.e. $|\hat{F}|$. For any $x_i \in |N|$, $\hat{f}$ can take one of the multiple of $2L\epsilon'$ values. Hence, there are $r/2L\epsilon'$ such possibilities as the range is $[0,r]$. Since there are $|N|$ such possibilities for $x_i$, the upper bound on all possible function values $\hat{f}$ is $(\frac{r}{2L\epsilon'})^{|N|} = (\frac{4r}{\epsilon})^{|N|}$, which proves the first statement after plugging in the value of $|N|$.

ii) Taking logarithm of the result in i)
\begin{align}
\label{entropy_x}
    \log \mathcal{N}(\epsilon,\mathcal{F}_L, ||.||_{\infty} ) \leq {\mathcal{N}(\epsilon/{8L},\mathcal{X}, ||.||_{\infty} )} \log \left(\frac{4r}{\epsilon}\right)
\end{align}

The covering number of the input space, $\mathcal{X}$ in terms of doubling dimension, $ddim(\mathcal{X})$ and diameter, $diam(\mathcal{X})$ can be written as \cite{krauthgamer2004navigating}:
\begin{align}
    {\mathcal{N}(\epsilon,\mathcal{X}, ||.||_{\infty} )}  \leq \left( \frac{2 diam(\mathcal{X})}{\epsilon} \right)^{ddim(\mathcal{X})}
\end{align}
Plugging this expression in eq.(\ref{entropy_x}), we obtain the required expression.

iii) The result in i) is with respect to $||.||_{\infty}$. In eq.(\ref{lips}), we showed that for any $f \in \mathcal{F}_L$ there is some $\hat{f} \in \hat{F}$ within a radius of $\epsilon$ such that $||f - \hat{f}||_{\infty} \leq \epsilon$. Here, we show that this also implies that $||f - \hat{f}||_{\ell_1(\mathbb{P}_m)} \leq \epsilon$. We show this as follows:
\begin{align}
    ||f - \hat{f}||_{\ell_1(\mathbb{P}_m)} &= \frac{1}{m} \sum_{k=1}^m |f(x_k)-\hat{f}(x_k)| \\
    &\leq \frac{1}{m} \sum_{k=1}^m \epsilon = \epsilon
\end{align}
Therefore, the entropy number with respect to ${\ell_1(\mathbb{P}_m)}$ metric is same as the entropy number with respect to the $||.||_{\infty}$, which proves our third claim.
\end{proof}
\end{customlemma}

\begin{customlemma}{10}[\cite{van1996weak} Theorem 3.5. ]
\label{gc}
Let $\mathcal{V}$ be a class of measurable functions with envelope $E$ such that $P(E) < \infty$. Let $\mathcal{V}_M$ be the class of functions $v.\textbf{1}\{E \leq M\}$ where $v$ ranges over $\mathcal{V}$. Then, $\mathcal{V}$ is a Glivenco Cantelli class of functions, \textit{i.e.} it satisfies 
\begin{align}
    \underset{v \in \mathcal{V}}{sup} \hspace{0.1cm}|\mathbb{P}_m v -Pv|
\end{align}
, if and only if
\begin{align}
    \frac{1}{m}\log N(\epsilon, \mathcal{V}_M, L_1(\mathbb{P}_m)) \overset{\mathbb{P}}{\to} 0,
\end{align}
for every $\epsilon >0$ and $M>0$, where $Pv = \int v dP$ and $\mathbb{P}_m v = \frac{1}{m} \sum_k v(x_k)$.
\end{customlemma}

\section{Experimental Results}
\paragraph{Code:} The code will be publicly released.

\subsection{Two Gaussian}
\subsubsection{Architecture and Implementation}
RKHS Discriminator Architecture (Pytorch Code)
\begin{lstlisting}
class RKHS_Net(nn.Module):
    def __init__(self, dim =10, mid_dim1=20, mid_dim2=20, mid_dim3=20, D=50, gamma =1, metric = 'rbf', lip=5, g_lip =5):
        super(RKHS_Net, self).__init__()
        self.gamma = torch.FloatTensor([gamma])
        self.metric = metric
        self.D = D
        self.act = nn.ReLU()
        self.lin1 = spectral_norm( nn.Linear(dim, mid_dim1), k =g_lip)
        self.lin2 = spectral_norm( nn.Linear(mid_dim1, mid_dim2), k =g_lip)
        self.lin3 = spectral_norm( nn.Linear(mid_dim2 , mid_dim3), k =g_lip)
        self.lin4 = spectral_norm( nn.Linear(mid_dim3, 1), k =g_lip)

        self.g = nn.Sequential(self.lin1,
                               self.act,
                               self.lin2,
                               self.act,
                               self.lin3,
                               self.act,
                               self.lin4
                               )

        self.lin_phi1 = spectral_norm(nn.Linear(2, mid_dim1), k=lip)
        self.lin_phi2 = spectral_norm(nn.Linear(mid_dim1, mid_dim2), k=lip)
        self.lin_phi3 = spectral_norm(nn.Linear(mid_dim2, mid_dim3), k=lip)
        self.lin_phi4 = spectral_norm(nn.Linear(mid_dim3, dim), k=lip)

        self.phi = nn.Sequential(self.lin_phi1,
                               self.act,
                               self.lin_phi2,
                               self.act,
                               self.lin_phi3,
                               self.act,
                               self.lin_phi4
                               )

    def forward(self, y):
        x=self.phi(y)
        d = x.shape[1]
        if self.metric =='rbf':
            w= torch.sqrt(2*self.gamma)*torch.randn(size=(self.D,d))
        w=w.to(x.device)
        psi = ((torch.matmul(x,w.permute(1,0)) ))*(torch.sqrt(2/torch.FloatTensor([self.D])).to(x.device))
        w_a = w
        g= self.g(w_a)
        f = (psi*g.permute(1,0)).mean(1)
        g_norm =(g**2).mean()
        return f, g_norm
\end{lstlisting}
Simple Neural Network Discriminator Architecture ( Pytorch Code)
\begin{lstlisting}
class DNet_basic(nn.Module):
    def __init__(self, input_dim, mid_dim1, mid_dim2, output_dim, lip_constraint = False, lip = 5):
        super(DNet_basic, self).__init__()

        self.act = nn.ReLU()
        self.lin1 = nn.Linear(input_dim, mid_dim1)
        self.lin2 = nn.Linear(mid_dim1, mid_dim2)
        self.lin3 = nn.Linear(mid_dim2, mid_dim2)
        self.lin4 = nn.Linear(mid_dim2, output_dim)
        # self.sigmoid=nn.Sigmoid()

        
        self.phi = nn.Sequential(self.lin1,
                                 self.act,
                                 self.lin2,
                                 self.act,
                                 self.lin3,
                                 self.act,
                                 self.lin4
                                 )

    def forward(self, x):
        t = self.phi(x)
        return t
\end{lstlisting}

\paragraph{Discrete approximation:} Both the discriminators have stacked Fully connected layers and activation function. In the proposed RKHS discriminator, we have an additional network \texttt{self.g} which we use to approximate the continuous integral $f(x)=\int_{\mathcal{W}}g(w)\psi(x,w)d\tau(w)$ with the following discrete approximation:
\begin{align}
\label{rkhs_discrete}
    f(x) =\frac{1}{D} \sum_{k=1}^D g(w_k)(\phi^Tw_k \sqrt{\frac{2}{D}})
\end{align}
where $w$ is sampled from a Normal distribution with variance $\gamma$. In our experiments $D = 500$ was sufficient. Note that the Neural network discriminator is similar to $\phi ^T w$, except that $w$ is not randomly sampled and there is no $g$ network.

\paragraph{Lipschitz constraints:}: To enforce Lipschitz constraints on network $g$ and $\phi$ consistent with our assumptions and theoretical results, we use spectral normalization in the RKHS discriminator while it is absent in the basic Neural network discriminator.

\subsubsection{Data and Hyperparameters}
\textbf{Data}: Since this is a toy experiment, data were generated locally using pytorch command \texttt{randn} to sample from Gaussian distribution.

\textbf{Learning rate:} $5\times 10^{-3} $ (both models)\\
\textbf{No. of samples from each distribution:} $2500$ (both models)\\
\textbf{Minibatch size:} 50 (both models)\\
$\boldsymbol{\lambda:} 0.005$ (RKHS disc.)\\
\textbf{Hyperparameter selection:} (RKHS disc.) The hyperparameters like learning rate and $\lambda$
 were selected by first estimating KL divergence at a mid value like $13$. Then, same value was used in all experiments.
\subsubsection{Computational Resources and Time}
Running one experiment of KL divergence calculation takes 74 s for the basic algorithm while it takes 245 s for the proposed method in a single GeForce GTX 1080 Ti GPU with 11GB memory.


\subsection{Mutual Information Estimation}

\subsubsection{Models, Architecture and Implementation}
RKHS Discriminator Architecture (Pytorch Code)
\begin{lstlisting}
class ConcatLipFeatures(nn.Module):
    def __init__(self, dim, hidden_dim, layers, activation,lip, gamma =1, metric = 'rbf', D=500, mid_dim=5, g_lip =2, **extra_kwargs):
        super(ConcatLipFeatures, self).__init__()
        self.gamma = torch.FloatTensor([gamma])
        self.metric = metric
        self.D = D
        self.act = nn.ReLU()
        self.lin1 = spectral_norm( nn.Linear(hidden_dim, mid_dim), k = g_lip)
        self.lin2 = spectral_norm( nn.Linear(mid_dim, mid_dim), k = g_lip)
        self.lin3 = spectral_norm( nn.Linear(mid_dim, mid_dim), k = g_lip)
        self.lin4 = spectral_norm( nn.Linear(mid_dim, 1), k = g_lip)

        self.g = nn.Sequential(self.lin1,
                               self.act,
                               self.lin2,
                               self.act,
                               self.lin3,
                               self.act,
                               self.lin4
                               )
        # output of this layer is d dim features
        self.rkhs_layer = feature_perceptron(dim * 2, hidden_dim, 1, layers, activation, lip)

    def forward(self, x, y):
        batch_size = x.size(0)
        # Tile all possible combinations of x and y
        x_tiled = torch.stack([x] * batch_size, dim=0)
        y_tiled = torch.stack([y] * batch_size, dim=1)
        # xy is [batch_size * batch_size, x_dim + y_dim]
        xy_pairs = torch.reshape(torch.cat((x_tiled, y_tiled), dim=2), [
                                 batch_size * batch_size, -1])
        # Compute features for each x_i, y_j pair.
        phi = self.rkhs_layer(xy_pairs)
        d = phi.shape[1]
        if self.metric == 'rbf':
            w = torch.sqrt(2 * self.gamma) * torch.randn(size=(self.D, d))
        w = w.to(x.device)
        psi = ((torch.matmul(phi, w.permute(1, 0)))) * (torch.sqrt(2 / torch.FloatTensor([self.D])).to(x.device))
        w_a = w  # torch.cat((w,u.permute(1,0)),1)
        g = self.g(w_a)
        f = (psi * g.permute(1, 0)).mean(1)
        g_norm = (g ** 2).mean()
        return f, g_norm
\end{lstlisting}
Simple Neural Network Discriminator Architecture (Pytorch Code)
\begin{lstlisting}
class ConcatCritic(nn.Module):
    def __init__(self, dim, hidden_dim, layers, activation, **extra_kwargs):
        super(ConcatCritic, self).__init__()
        # output is scalar score
        self._f = mlp(dim * 2, hidden_dim, 1, layers, activation)

    def forward(self, x, y):
        batch_size = x.size(0)
        # Tile all possible combinations of x and y
        x_tiled = torch.stack([x] * batch_size, dim=0)
        y_tiled = torch.stack([y] * batch_size, dim=1)
        # xy is [batch_size * batch_size, x_dim + y_dim]
        xy_pairs = torch.reshape(torch.cat((x_tiled, y_tiled), dim=2), [
                                 batch_size * batch_size, -1])
        # Compute scores for each x_i, y_j pair.
        scores = self._f(xy_pairs)
        return torch.reshape(scores, [batch_size, batch_size]).t()
\end{lstlisting}

Similar to the previous experiment, the RKHS discriminator and the Neural network discriminator are similar in core design. The main difference lies in that the RKHS discriminator has this inner product construction same as eq.(\ref{rkhs_construct}) in previous subsection. To achieve this construction, the RKHS discriminator an additional network, $\texttt{self.g}$ and enforces Lipschitz constraint through spectral normalization, which are absent in simple Neural network discriminator.
\subsubsection{Data and Hyperparameters}
\textbf{Data:} The experimental setup and data generation follow \texttt{https://github.com/ermongroup/smile-mi-estimator}.\\
\underline{\textbf{Common for all methods}}\\
\textbf{batch size:} 64\\
\textbf{no. of layers:} 2\\
\textbf{hidden dim:} 256\\
\textbf{no. of iterations:} 40000\\
\textbf{learning rate:} $5 \times 10^{-4}$

\underline{\textbf{Specific to the proposed method}}\\
$\gamma:$5\\
Lipschitz constant enforced, $L_{\phi}$ (layer wise): 5\\
Lipschitz constant enforced, $L_{g}$ (layer wise): 5

\subsubsection{Computational Resources and Time}
GPU: GeForce RTX 2080 Ti 11 GB

Below, we report time taken by each method to complete an experiment to obtain mutual information between two 20-d Gaussian distributed random variables using $40,000$ samples from each distribution and mutual information increasing stepwise.
\begin{table}[h]
  \caption{Time taken to complete one experiment}
  \centering
  \begin{tabular}{llll}
    \toprule
    CPC     & NWJ     & SMILE & Ours (RKHS disc.) \\
    \midrule
    52 s & 48 s  & 52 s & 63 s    \\
    \bottomrule
  \end{tabular}
\end{table}

\subsubsection{Existing Assets}
We used the code from the repo \texttt{https://github.com/ermongroup/smile-mi-estimator} to generate data as well as run baseline mutual information methods. This code corresponds to the Song et al. \cite{song2019understanding}.

\subsection{Adversarial Variational Bayes}

\subsubsection{Models, Architecture and Implementation}
RKHS Discriminator Architecture (Pytorch Code)
\begin{lstlisting}
class Discriminator_RKHS(nn.Module):
    def __init__(self, x_dim, h_dim, z_dim, lip = 5, g_lip = 5, dim = 10, mid_dim1 = 20, mid_dim2 = 20, mid_dim3 = 20, D=100, gamma =1, metric = 'rbf'):
        super(Discriminator_RKHS, self).__init__()
        self.metric = metric
        self.gamma = torch.FloatTensor([gamma])
        self.D = D
        self.phi = nn.Sequential(
            spectral_norm(nn.Linear(x_dim + z_dim, h_dim), k = lip),
            nn.LeakyReLU(),
            spectral_norm(nn.Linear(h_dim, h_dim), k = lip),
            nn.LeakyReLU(),

            spectral_norm(nn.Linear(h_dim, h_dim), k = lip),
            nn.LeakyReLU(),
            spectral_norm(nn.Linear(h_dim, h_dim), k = lip),
            nn.LeakyReLU(),
            spectral_norm(nn.Linear(h_dim, int(h_dim/4)), k = lip)

        )

        self.act = nn.ReLU()
        self.lin1 = spectral_norm(nn.Linear(int(h_dim/4), mid_dim1), k=g_lip)
        self.lin2 = spectral_norm(nn.Linear(mid_dim1, mid_dim2), k=g_lip)
        self.lin3 = spectral_norm(nn.Linear(mid_dim2, mid_dim3), k=g_lip)
        self.lin4 = spectral_norm(nn.Linear(mid_dim3, 1), k=g_lip)

        self.g = nn.Sequential(self.lin1,
                               self.act,
                               self.lin2,
                               self.act,
                               self.lin3,
                               self.act,
                               self.lin4
                               )

    def weight_init(self, mean, std):
        for m in self._modules:
            normal_init(self._modules[m], mean, std)

    def forward(self, y, z):
        y = y.view(y.shape[0], -1)
        y = torch.cat([y, z], 1)
        x =self.phi(y)
        d = x.shape[1]

        if self.metric == 'rbf':
            w = torch.sqrt(2 * self.gamma) * torch.randn(size=(self.D, d))
        w = w.to(x.device)
        psi = ((torch.matmul(x, w.permute(1, 0)))) * (torch.sqrt(2 / torch.FloatTensor([self.D])).to(x.device))
        w_a = w  
        g = self.g(w_a)
        f = (psi * g.permute(1, 0)).mean(1)
        g_norm = (g ** 2).mean()
        return f, g_norm
\end{lstlisting}

Simple Neural Network Discriminator Architecture (Pytorch Code)
\begin{lstlisting}
class Discriminator_simple(nn.Module):
    def __init__(self, x_dim, h_dim, z_dim):
        super(Discriminator_simple, self).__init__()
        self.net = nn.Sequential(
            nn.Linear(x_dim + z_dim, h_dim),
            nn.LeakyReLU(),
            nn.Linear(h_dim, h_dim),
            nn.LeakyReLU(),

            nn.Linear(h_dim, h_dim),
            nn.LeakyReLU(),
            nn.Linear(h_dim, h_dim),
            nn.LeakyReLU(),
            nn.Linear(h_dim, int(h_dim/4)),
            nn.LeakyReLU(),
            nn.Linear(int(h_dim/4), 1)
        )

    def weight_init(self, mean, std):
        for m in self._modules:
            normal_init(self._modules[m], mean, std)

    def forward(self, x, z):
        x = x.view(x.shape[0], -1)
        x = torch.cat([x, z], 1)
        out =self.net(x)
        # x = x + torch.sum(z ** 2, 1)
        return out
\end{lstlisting}
\subsubsection{Data and Hyperparameters}
\textbf{Data:} Standard MNIST dataset is used. 

\textbf{Learning rate:} $ 10^{-3} $ (both models)\\
\textbf{Minibatch size:} 1024 (both models)\\
\textbf{Hidden dim of encoder/decoder:} 800 (both)\\
\textbf{Hidden dim discriminator:} 1024 (both)\\
$\boldsymbol{\lambda:} 1$ (RKHS disc.)\\

\subsubsection{Computational Resources and Time}
GPU: GeForce GTX 1080 Ti 11GB\\
Time taken to train MNIST for 1000 epochs using AVB with simple Neural net discriminator: 11.3 hrs\\
Time taken to train MNIST for 1000 epochs using AVB with RKHS discriminator: 14.7 hrs

\subsubsection{Existing Assets}
We followed the official implementation of Adversarial Variational Bayes \cite{Mescheder2017ICML} at \texttt{https://github.com/LMescheder/AdversarialVariationalBayes}

\section{Societal Impacts}
We discuss possible negative impacts in two categories: 1) Impact of theoretical contribution, 2) Impact of applications

\paragraph{Societal Impact of theoretical contribution:} The main theoretical contribution of the paper is its connection between reliable/stable estimation and complexity analysis of the discriminator function space. In its general form, this contribution does not, by itself, pose any negative societal impact. Rather, it is about stabilizing algorithms. So, it contributes towards more robust and stable algorithms, and may help in developing more secure applications. We do not foresee any negative societal impacts in safety and security of human beings and automatic systems, human rights, human livelihood or economic security, environment. We do not see it causing theft, harassment, fraud, bias or discrimination.

\paragraph{Societal Impact of possible applications:} As demonstrated in the experiment section, this work can be applied to information theoretic applications that require mutual information or KL divergence estimation. For example, it has been used in generative modeling like variational autoencoder, variational Bayes or in stabilizing generative adversarial networks (GANs). These generative modeling techniques are, by themselves, quite general and can have numerous applications, including the ones with negative impacts. By helping in accurate estimation of KL divergence and by providing theoretical analysis, this work is contributing to develop stronger generative models and by extension could be indirectly helping in their negative uses. In that aspect, we appeal everyone using the algorithms and ideas in this paper to be thoughtful and responsible in their use.
\end{document}